\documentclass{article} 
\usepackage{iclr2024_conference,times}
\usepackage{graphicx}

\usepackage{amsmath,amsfonts,bm}









\def\eqref#1{equation~\ref{#1}}









\def\1{\bm{1}}










\DeclareMathAlphabet{\mathsfit}{\encodingdefault}{\sfdefault}{m}{sl}
\SetMathAlphabet{\mathsfit}{bold}{\encodingdefault}{\sfdefault}{bx}{n}













\usepackage{hyperref}
\usepackage{url}
\usepackage{amsthm}
\newtheorem{theorem}{Theorem}
\newtheorem{definition}[theorem]{Definition}
\newtheorem{lemma}[theorem]{Lemma}
\newtheorem{corollary}{Corollary}[theorem]
\newtheorem{proposition}{Proposition}[theorem]
\title{Flat Minima in Linear Estimation and an Extended Gauss Markov Theorem}


\author{Simon Segert\\
Princeton Neuroscience Institute\\
Princeton University\\
Princeton, NJ \\
\texttt{ssegert@princeton.edu} \\
}
%

\iclrfinalcopy 

\begin{document}

\maketitle

\begin{abstract}
We consider the problem of linear estimation, and establish an extension of the Gauss-Markov theorem, in which the bias operator is allowed to be non-zero but bounded with respect to a matrix norm of Schatten type.  We derive simple and explicit formulas for the optimal estimator in the cases of Nuclear and  Spectral norms (with the Frobenius case recovering ridge regression). Additionally, we analytically derive the generalization error in multiple random matrix ensembles, and compare with Ridge regression. Finally, we conduct an extensive simulation study, in which we show that the cross-validated Nuclear and Spectral regressors can outperform Ridge in several circumstances.
\end{abstract}

\section{Introduction}
Linear models are among the most used of all machine-learning models in applications to science and engineering. In addition to this practical interest, they are also of great theoretical interest. Indeed, the empirical successes of neural networks have proven somewhat at odds with traditional received wisdom from classical statistical learning theory\citep{belkin2}. This has begun to be reconciled by careful analysis of well-chosen linear ``model systems" such as high-dimensional regression \citep{hastie}, kernel ridge regression \citep{canatar,jacot}, or linear neural networks \citep{saxe}, which can reproduce many qualitative properties of learning dynamics of neural networks. Thus, the careful study of linear models, especially in the limit of a large number of features and observations can prove highly valuable for qualitative understanding of non-linear models. 

Additionally, it has been found that explicit regularization is often a key ingredient for achieving high performance of both linear and non-linear models. For example, neural networks are often trained using L2 regularization \citep{goodfellow} or dropout \citep{srivastava}. By contrast, most of the above-mentioned theoretical work on linear models considers specifically L2 regularization (i.e. Ridge regression). It is quite natural, then, to think that the linear setting could be used as a test case for studying other forms of regularization. This question becomes especially interesting in light of several recent works which have explored the effect of various non-standard forms of regularization on neural networks such as rank constraints \citep{rao,yang} or Spectral norms \citep{johansson,yoshida}. The Nuclear norm (or convexified rank) also plays a key role in other kinds of high-dimensional non-linear learning problems \citep{hu}, most notably matrix completion, and is related to dropout regularization in neural networks \citep{mianjy}.

Thus with the general motivation of gaining a detailed understanding of different kinds of regularizations in a tractable yet previously-validated and informative setting, we study in this work the effect of different matrix norm regularizations in the context of high-dimensional linear regression. More specifically, we consider the problem of linear regression, under the constraint that the \textit{bias matrix} is bounded according to some fixed matrix norm. We prove a Gauss-Markov-like theorem for this setting, which characterizes the optimal estimator at a fixed level of bias,  assuming the norm in question is of Schatten type.  We will be specifically interested in Nuclear, Frobenius, and Spectral norms, in which case the optimal estimators have especially simple forms, which we derive.

Next, we characterize the test error of Nuclear and Spectral estimators in several random matrix ensembles (the corresponding results for Ridge regression being well known). We find an intriguing pattern: while the Ridge estimator can always attain the lowest error of any estimator, the minima can be  \textit{sharp} (as a function of regularization strength). The Nuclear estimator, by contrast, can often attain test error nearly as low as the Ridge case, but with significantly flatter minima. This has implications for practical situations when the regularization strength is selected using a noisy search procedure such as cross validation, as it could happen that such a procedure finds a better solution in the latter case, even if the actual global optimum is better in the former case. We then perform a series of simulation studies with cross-validation, in which we find that this can actually happen, with both Gaussian features and Random Fourier features, and more generally characterize the relative performance of the three estimators across a range of several generative factors. We close with a survey of related work and general discussion.
\section{Theory}
\subsection{Setup and Extended Gauss-Markov Theorem for Schatten norms}
\label{main_theorem_section}
Throughout, we will be concerned with the classic regression problem. Denote the training data by $X$, which is an $N\times d$ matrix, and the training targets by $Y$, which is an $N\times 1$ vector. $Y$ is assumed to have distribution $Y=X\beta_0+\epsilon$, where $\epsilon$ are independent samples from a fixed noise distribution. 

We will assume that the noise distribution has mean zero and variance $\sigma^2=1$ , but otherwise do not place distributional assumptions on it. We will restrict attention to the class of \textit{Linear Estimators}. These are models that take the form
\begin{equation}
\label{linear_estimator_defn}
\hat{\beta}=LY
\end{equation}
Where $L$ is an $d\times N$ matrix that depends (possibly non-linearly) on the training data. Given a new datapoint $x_{test}$,the predicted value is just $\hat{y}=\langle x_{test},\hat{\beta}\rangle$. One simple observation is that for any linear estimator the bias $\mathbb{E}_{\epsilon}\hat{\beta}-\beta_0$ can be expressed as a linear function of $\beta_0$:
\begin{equation}\mathbb{E}_{\epsilon}\hat{\beta}-\beta_0=(LX-I)\beta_0\end{equation}
We thus define the bias operator $B:=LX-I$.  Note that the actual bias value generally depends on the unknown quantity $\beta_0$, while the variance $Var_{\epsilon}(\hat{\beta})=LL^T$ does not. The classic Gauss-Markov theorem tells us that if we want to impose exactly zero bias\footnote{Note the subtle point that this is the only case in which we can exactly control the bias without knowing the value of $\beta_0$}, then the minimal variance estimator is the ordinary least squares estimator. But what if we want to relax this, to allow non-zero but bounded amount of bias? Taking this desideratum literally, we run into the fundamental issue that the bias depends on the unknown $\beta_0$. As a proxy, we propose to control the size of $B$ instead, using some choice of matrix norm. In what follows, we will only consider Schatten norms, $\|M\|_p=(\sum_i \sigma_i(M)^p)^{1/p}$, $p\geq 1$, although in principle other matrix norms could be used. \footnote{We reserve the notation $\|\cdot\|_p$ for a Schatten norm of a matrix, and use $|\cdot|_p$ for Euclidean vector norms}This motivates the following:

\begin{definition}
Let $p\geq 1$, and let $C\geq 0$. For any matrix $X\in\mathbb{R}^{N\times d}$,the \textit{p-Bias constrained Linear estimator} $L_p(X)$ is defined as the minimal variance estimator which has p-bias of at most $C$. That is, 
\begin{equation}
\label{main_objective}
L_p(X)=argmin_{L\in\mathbb{R}^{d\times N}; \|LX-I_d\|_p\leq C} Tr(LL^T)/2\end{equation}
where $\|\cdot\|_p$ is a Schatten norm.  If $Y$ is a vector of regression targets, the estimated coefficient vector is $\hat{\beta}:=L_p(X)Y$.
\end{definition}

We now state: 

\begin{theorem}
(Extended Gauss-Markov)
\label{main_theorem}
Let $X\in \mathbb{R}^{N\times d}$ be a matrix of observations. Assume that $G:=X^TX$ is invertible, and let $G=Udiag(\sigma) U^T$ be the diagonalized form. For $p\geq 1$, the p-Bias constrained estimator with bound $C$ can be expressed in the form $L_p(X)=\hat{G}^{-1}X^T$, where $\hat{G}$ is symmetric,simultaneously diagonalizable with $G$, and satisfies $G
\preceq\hat{G}$\footnote{i.e., $\hat{G}-G$ is non-negative definite}. For the following special cases of $p$ we further have:

$p=1$ (Nuclear norm): $\hat{G}=Udiag(\max(\sigma,\alpha))U^T$

$p=2$ (Frobenius norm): $\hat{G}=G+\alpha I_d$

$p=\infty$ (Spectral norm): $\hat{G}=(1+\alpha)G$

where $\alpha\geq 0$ is determined by $C$, and $max$ denotes the elementwise maximum, i.e. $max(\sigma,\alpha)_i=max(\sigma_i,\alpha)$
\end{theorem}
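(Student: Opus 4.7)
The plan is to reduce the matrix optimization to a scalar problem via an SVD of $X$ and a symmetrization argument, then solve the resulting one-dimensional problem in closed form. Concretely, write $X=\tilde U\,\Sigma_X U^T$ where $U$ is the diagonalizer of $G$ from the statement and the top $d\times d$ block of $\Sigma_X$ is $D:=\mathrm{diag}(\sqrt{\sigma_i})$. Setting $\tilde L:=U^T L\tilde U=[A\mid R]$ with $A\in\R^{d\times d}$, unitary invariance of the Schatten and Frobenius norms recasts the problem as minimizing $\|A\|_F^2+\|R\|_F^2$ subject to $\|AD-I\|_p\leq C$. The constraint is independent of $R$, so $R=0$ at any optimum. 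Letting $B:=AD-I$ and thus $A=(B+I)D^{-1}$, the problem becomes the minimization of $F(B):=\|(B+I)D^{-1}\|_F^2$ over the Schatten ball $\{B:\|B\|_p\leq C\}$.

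The map $F$ is strictly convex (Frobenius-squared composed with an invertible affine map) on a compact convex domain, so the minimizer $B^*$ is unique. For any orthogonal $O$ commuting with $D$, unitary invariance gives $\|O^T B O\|_p=\|B\|_p$, and using $O^T I O=I$ together with $OD^{-1}=D^{-1}O$ yields $F(O^TBO)=F(B)$. Uniqueness then forces $O^TB^*O=B^*$ for every such $O$. By Schur's lemma applied to each eigenspace of $D$ (on which the orthogonal $O$ act by arbitrary rotations), $B^*$ must be block-scalar on the eigenspaces of $D$, hence symmetric and simultaneously diagonalizable with $G$. Writing $B^*=U\,\mathrm{diag}(b)\,U^T$ yields $L=U[A\mid 0]\tilde U^T$ with diagonal $A_{ii}=(b_i+1)/\sqrt{\sigma_i}$, which can be identified with $\hat G^{-1}X^T$ for $\hat G=U\,\mathrm{diag}(\sigma_i/(b_i+1))\,U^T$.

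It remains to solve the scalar problem $\min_b\sum_i (b_i+1)^2/\sigma_i$ subject to $(\sum_i|b_i|^p)^{1/p}\leq C$. A sign-flip argument shows $b_i\leq 0$ and a truncation argument shows $b_i\geq -1$, so $\hat\sigma_i=\sigma_i/(b_i+1)\geq\sigma_i$ and $\hat G\succeq G$ as claimed. The KKT conditions with multiplier $\mu\geq 0$ then give: for $p=1$, $b_i+1=\min(1,\mu\sigma_i/2)$ so $\hat\sigma_i=\max(\sigma_i,2/\mu)$; for $p=2$, $b_i+1=\mu\sigma_i/(1+\mu\sigma_i)$ so $\hat\sigma_i=\sigma_i+1/\mu$; and for $p=\infty$ (with $C<1$), all $|b_i|$ saturate at $C$, so $\hat\sigma_i=\sigma_i/(1-C)$. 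Setting $\alpha$ to be $2/\mu$, $1/\mu$, and $C/(1-C)$ respectively reproduces the three stated formulas, with $\alpha$ a monotone function of $C$ in each case.

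The main technical subtlety is the symmetrization step, particularly justifying that the invariance $O^TB^*O=B^*$ forces \textit{full} simultaneous diagonalizability with $G$ rather than merely block-diagonality when $G$ has repeated eigenvalues; this needs a careful Schur-lemma or bicommutant argument together with the use of $\pm I$ reflections between eigenspaces to kill off-diagonal blocks. A secondary issue is the non-smoothness of the Schatten norm at $p\in\{1,\infty\}$, where the KKT stationarity must be verified via subdifferentials rather than gradients; one also has to treat boundary values of $C$ separately (e.g.\ $C\geq 1$ at $p=\infty$, which corresponds to the trivial zero estimator and $\hat G=\infty$).
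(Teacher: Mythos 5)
Your proposal is correct, and while it follows the same broad outline as the paper (reduce to a $d\times d$ problem, prove the optimizer is diagonal in the eigenbasis of $G$, then solve scalar problems), several key steps are done by genuinely different means. For the reduction you use a full SVD of $X$ rather than the paper's decomposition $L=QX^T+Q_\perp X_\perp^T$ along $Colspace(X)$ and its complement; these are equivalent. For diagonality, the paper proves a pinching lemma $\|M_d\|_p\leq\|M\|_p$ via Ky Fan's inequality and Schur convexity, whereas you exploit invariance of the strictly convex problem under conjugation by orthogonal matrices commuting with $D$, so the unique minimizer lies in the commutant; your explicit treatment of repeated eigenvalues (scalar blocks via Schur's lemma, $\pm I$ reflections to kill off-diagonal blocks) is exactly the care this route requires, and it trades the paper's matrix inequality for a symmetry-plus-uniqueness argument. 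You also keep the hard constraint and argue via KKT/subdifferentials, while the paper passes to a Lagrangian penalty; a payoff is that your truncation argument for $-1\leq b_i\leq 0$ works uniformly in $p$, whereas the paper's reflection argument needs the footnoted caveat at $p=\infty$. The largest simplification is the Spectral case: the paper invokes Von Neumann's minimax theorem with the $\ell^1$-ball representation of $|\cdot|_\infty$, whereas you observe that the box constraint $\{|b|_\infty\leq C\}$ makes the problem separable with each coordinate saturating at $-C$, giving $\hat{G}=G/(1-C)$ directly; both yield the same answer, but your route is shorter. One cosmetic slip: after the symmetrization, $B^*$ is diagonal in the rotated coordinates, so writing $B^*=U\,\mathrm{diag}(b)\,U^T$ conflates the two bases (that expression is the bias operator in the original coordinates); the subsequent identification $\hat{\sigma}_i=\sigma_i/(b_i+1)$ and the three closed forms are nonetheless correct and match the theorem, including the degenerate boundary cases ($b_i=-1$, or $C\geq\|I_d\|_p$) which both you and the paper treat as the trivial zero estimator.
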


Note that the $p=2$ case is just Ridge regression, while the $p=\infty$ case is just scalar shrinkage of $\hat{\beta}_{OLS}$ towards 0, as per \citet{stein}. We refer to the $p=1$ case as Nuclear Regression and likewise Spectral Regression for the $p=\infty$ case. The exact relation between $C$ and $\alpha$ is given in \ref{alpha_vs_C}.

We also note that our theorem does not perfectly generalize the classical Gauss-Markov theorem. Indeed, the classical theorem is usually stated in the form $Var(\hat{\beta}_{OLS})\preceq Var(\hat{\beta})$ where $\hat{\beta}$ is any unbiased linear estimator \citep{wooldrige}. However, taking $C=0$ in our Theorem \ref{main_theorem}, we would conclude only that $Tr(Var(\hat{\beta}_{OLS}))\leq Tr(Var(\hat{\beta}))$, which is strictly weaker. But, we regard this as a bit of a technicality, and think our theorem faithfully captures the spirit of the original.

\begin{figure}[h]
\begin{center}
\includegraphics[width=14cm,height=3cm]{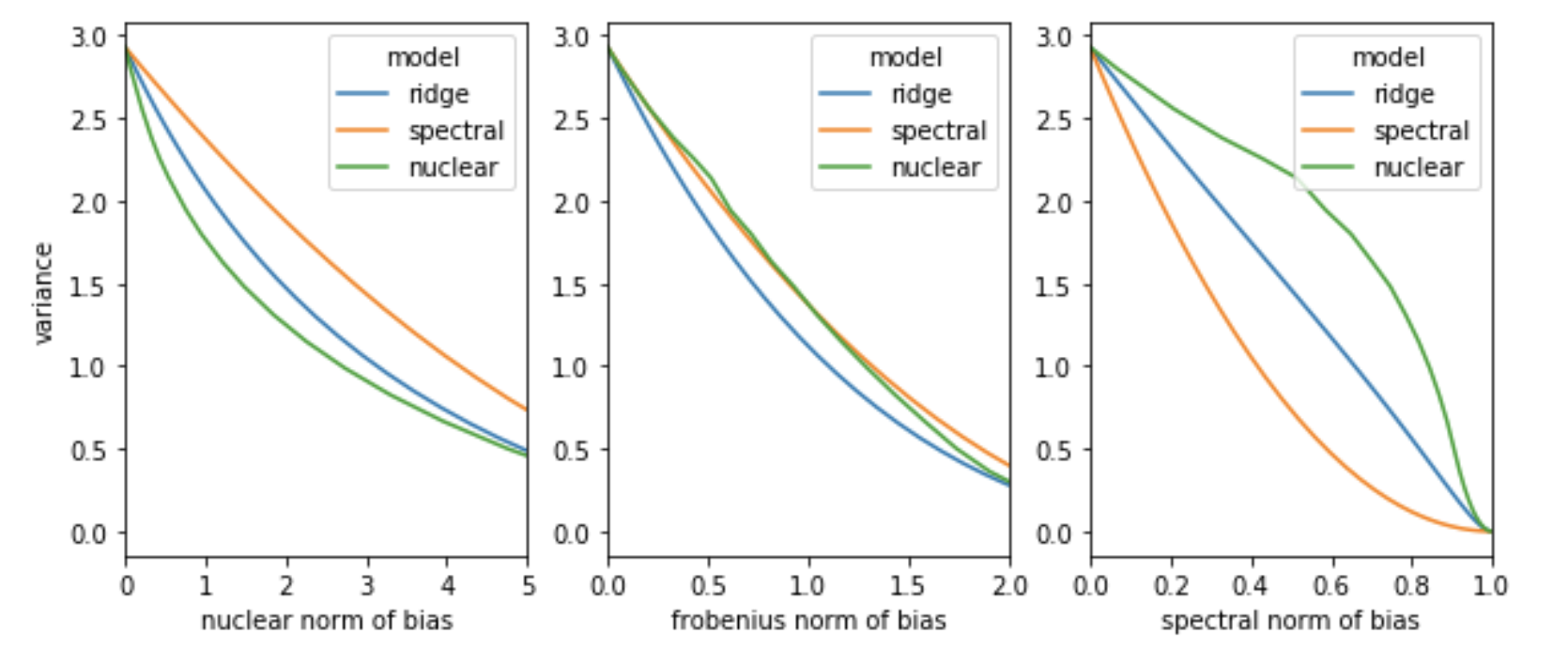}

\end{center}
\caption{Illustration of Theorem \ref{main_theorem} Each point on the curve corresponds to a regression fit with a certain value of $\alpha$, on the matrix $X=Diag(\sqrt{1},\sqrt{2},\dots,\sqrt{10})$. We show the norm of the bias matrix $B=LX-I$ on the x axis, and the sample variance $Tr(L^TL)$ on the y axis. The green curve always lies below the other two on the left plot, and analogously for the other two plots. }
\end{figure}
\subsection{Formulas for Test Error}
We next analytically characterize the generalization performance of the estimators derived in the previous section, under certain specific assumptions on the distribution of $X$. We will consider  two different random matrix ensembles: a standard spherical Gaussian ensemble, and another with orthogonal predictors but arbitrary spectrum. We will compute the test error averaged over random datasets, in an appropriate thermodynamic limit.

\subsubsection{Spherical Gaussian ensemble}
\label{subsec_gaussian}
Consider a predictor matrix $X\in\mathbb{R}^{N\times d}$, where each entry is an independent centered Gaussian of variance $1/N$. We consider the thermodynamic limit in which $N,d\to\infty$ and $d/N\to\lambda < 1$. The training targets are distributed as $Y_i=\langle X_i,\beta_0\rangle +\sigma \epsilon_i$, where $\epsilon_i\sim N(0,1)$, and the magnitude of the ground-truth coefficient vector $\beta_0$ satisfies $|\beta_0|^2/d\to \beta^2$, for some fixed $\beta>0$. 

The testing error for a given dataset $X,Y$ is defined as 

\begin{equation}\label{mse_spherical}MSE:=\mathbb{E}_{x\sim N(0,I_d/N)} (\langle x,\beta_0\rangle-\langle x,\hat{\beta}\rangle)^2,\end{equation}
where $\hat{\beta}$ is the coefficient vector estimated from $X$ and $Y$. (Note that we do not include exogenous noise in the testing data; if we did, it would just contribute an additive offset of $\sigma^2$). We observe that the expression can be simplified to $|\beta_0-\hat{\beta}|^2/N$, however we write it in the more verbose form above to make clear that it is obtained as an average over the distribution of the testing point $x$.

We are interested in the average value of this over different datasets. Using standard techniques in Random Matrix theory, it is not hard to show (cf. \ref{mse_proof}):

\begin{proposition}
\label{prop_gaussian_mse}
Consider the $p$-bias constrained estimator with regularization strength $\alpha$. 

In the above limit, the average test error $Err_p(\alpha):=\mathbb{E}_{X,Y} MSE$ is given by 
\begin{equation}Err(\alpha)=\lambda \int  \beta^2  (1-{\frac {x}{f_{\alpha}(x)}})^2+ \sigma^2  {\frac {x}{f_{\alpha}(x)^2}} \mu^{\lambda}_{MP}(dx)\end{equation}

where $\mu_{MP}^{\lambda}$ is the Marchenko-Pastur density with concentration parameter $\lambda$, and $f_{\alpha}$ is defined casewise as follows: 
\begin{equation}f_{\alpha}(x)=max(x,\alpha),p=1\end{equation}
\begin{equation}f_{\alpha}(x)=x+\alpha,p=2\end{equation}
\begin{equation}f_{\alpha}(x)=x(1+\alpha),p=\infty\end{equation}

\end{proposition}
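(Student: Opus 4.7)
The plan is to compute the expectation in three successive stages: first over the label noise $\epsilon$, then over the eigenvector matrix of $G := X^T X$, and finally over the spectrum of $G$ via Marchenko--Pastur. By Theorem \ref{main_theorem} the estimator has the form $\hat\beta = \hat G^{-1} X^T Y$, where $\hat G$ is simultaneously diagonalizable with $G$ and has eigenvalues $f_\alpha(\sigma_i)$. Substituting $Y = X\beta_0 + \sigma\epsilon$ gives $\hat\beta - \beta_0 = (\hat G^{-1} G - I)\beta_0 + \sigma \hat G^{-1} X^T \epsilon$, and taking $\mathbb{E}_\epsilon$ of $|\hat\beta - \beta_0|^2/N$ (the cross term vanishes because $\mathbb{E}_\epsilon \epsilon = 0$) yields the clean bias-plus-variance expression
\[
\mathbb{E}_\epsilon[\mathrm{MSE}] \;=\; \frac{1}{N}\,\beta_0^T(I - \hat G^{-1} G)^2 \beta_0 \;+\; \frac{\sigma^2}{N}\,\mathrm{Tr}(\hat G^{-2} G).
\]
The variance trace immediately becomes $\frac{1}{N}\sum_i \sigma_i / f_\alpha(\sigma_i)^2$ by diagonalizing in the common eigenbasis of $G$ and $\hat G$.

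For the bias term I invoke the standard Wishart fact that when $X$ has i.i.d.\ Gaussian entries the eigenvectors of $G$ form a Haar-random orthogonal matrix $U$ that is independent of the eigenvalues $(\sigma_i)$. Conditional on the spectrum, $U^T \beta_0$ is therefore uniform on the sphere of radius $|\beta_0|$, so $\mathbb{E}_U[(U^T \beta_0)_i^2] = |\beta_0|^2/d$ for every $i$. Writing the quadratic form in the common eigenbasis and averaging over $U$ gives
\[
\mathbb{E}_X\!\left[\tfrac{1}{N}\,\beta_0^T (I - \hat G^{-1} G)^2 \beta_0\right] = \frac{|\beta_0|^2}{d N}\,\mathbb{E}_\sigma \sum_{i=1}^d \bigl(1 - \sigma_i / f_\alpha(\sigma_i)\bigr)^2.
\]

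Finally, Marchenko--Pastur gives that $\tfrac{1}{d}\sum_i \delta_{\sigma_i}$ converges weakly almost surely to $\mu_{MP}^\lambda$, whose support $[(1-\sqrt\lambda)^2,(1+\sqrt\lambda)^2]$ is compactly contained in $(0,\infty)$ since $\lambda<1$. On this support both $(1 - x/f_\alpha(x))^2$ and $x/f_\alpha(x)^2$ are bounded and continuous for each of $p \in \{1,2,\infty\}$, so weak convergence together with bounded convergence passes the spectral averages under the integral sign. The prefactors $|\beta_0|^2/N = (|\beta_0|^2/d)(d/N) \to \beta^2 \lambda$ and $d/N \to \lambda$ combine to produce the single factor of $\lambda$ in front of each integral, yielding the claimed formula.

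I do not anticipate any serious obstacle: the one delicate point is the Nuclear case $p=1$, where $f_\alpha(x) = \max(x,\alpha)$ has a kink at $x=\alpha$, but the relevant integrands remain bounded continuous on the compact MP support and so weak convergence suffices without any finer analytic control. Modulo that, the whole argument is a direct generalization of the well-known derivation of the ridge MSE formula, with $f_\alpha(x)$ taking the place of $x+\alpha$.
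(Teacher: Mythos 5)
Your argument is correct and follows essentially the same route as the paper's proof: use the rotational invariance of the Gaussian ensemble (Haar-distributed eigenvectors independent of the spectrum, which the paper phrases as exchangeability) to reduce the bias and variance terms to spectral averages in the common eigenbasis of $G$ and $\hat G$, then invoke Marchenko--Pastur with the scalings $d/N\to\lambda$, $|\beta_0|^2/d\to\beta^2$. The only gloss---shared with the paper's even terser sketch---is the passage from almost-sure weak convergence of the empirical spectral measure to convergence of the expectation $\mathbb{E}_{X,Y}\,\mathrm{MSE}$, which strictly needs a word on eigenvalues escaping the limiting support at finite $N$ and on uniform integrability (e.g.\ $x/f_\alpha(x)^2$ is not globally bounded near $x=0$ for $p=\infty$ or small $\alpha$), but this is routine given $\lambda<1$.
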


In our later analyses, we will employ this formula, and evaluate the integral using numerical quadrature. However, it is interesting to note that the integral can be actually be further simplified in each of the three cases. For the ridge case, there is a well-known formula in terms of the Stieltjes transform of $\mu_{MP}^{\lambda}$ that easily follows from the above integral\citep{bai}. In the spectral case, we can derive a formula using simple calculus:
\begin{proposition}
For the Spectral estimator in the spherical Gaussian setup, the test error in the thermodynamic limit equals
\begin{equation}Err_{\infty}(\alpha)={\frac {\lambda\beta^2\alpha^2+{\frac {\lambda}{1-\lambda}}\sigma^2}{(1+\alpha)^2}}\end{equation}
\end{proposition}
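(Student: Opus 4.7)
The plan is to simply substitute $f_\alpha(x) = x(1+\alpha)$ into the general formula from the preceding proposition and exploit the fact that for the spectral case the integrand separates into an $x$-independent piece and a simple $1/x$ piece, so only the zeroth and (negative) first moments of the Marchenko--Pastur law are needed.

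First I would compute the two factors appearing in the integrand. With $f_\alpha(x) = (1+\alpha)x$, one gets
\begin{equation}
1 - \frac{x}{f_\alpha(x)} = \frac{\alpha}{1+\alpha}, \qquad \frac{x}{f_\alpha(x)^2} = \frac{1}{(1+\alpha)^2 x}.
\end{equation}
Substituting these into the formula of Proposition~\ref{prop_gaussian_mse}, the bias-squared term becomes a constant and pulls out of the integral against $\mu^\lambda_{MP}$, leaving the factor $\int \mu^\lambda_{MP}(dx) = 1$. The variance term reduces to $\frac{\sigma^2}{(1+\alpha)^2}\int x^{-1}\,\mu^\lambda_{MP}(dx)$.

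The only genuine computation is the negative first moment of the Marchenko--Pastur distribution with concentration $\lambda < 1$. This is a standard identity: $\int x^{-1}\,\mu^\lambda_{MP}(dx) = \frac{1}{1-\lambda}$. I would either cite this directly, or give a short derivation by evaluating the Stieltjes transform $m_{MP}^\lambda(z) = \int (x-z)^{-1}\mu^\lambda_{MP}(dx)$ at $z=0$, using the well-known quadratic equation $\lambda z m^2 + (z + \lambda - 1)m + 1 = 0$ which at $z=0$ yields $m(0) = -\frac{1}{1-\lambda}$, hence $\int x^{-1}\,\mu^\lambda_{MP}(dx) = -m(0) = \frac{1}{1-\lambda}$.

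Combining these pieces and multiplying by the outer $\lambda$ gives
\begin{equation}
Err_\infty(\alpha) = \frac{\lambda \beta^2 \alpha^2}{(1+\alpha)^2} + \frac{\lambda \sigma^2}{(1-\lambda)(1+\alpha)^2},
\end{equation}
which is the claimed expression. There is no real obstacle here; the only subtle point is that the integrability of $1/x$ against $\mu^\lambda_{MP}$ relies on $\lambda < 1$ (so that the left edge of the support is bounded away from $0$), which is exactly the regime assumed in Section~\ref{subsec_gaussian}.
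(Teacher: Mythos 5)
Your proof is correct and follows essentially the same route as the paper, which derives this case by direct substitution of $f_\alpha(x)=(1+\alpha)x$ into the integral of Proposition~\ref{prop_gaussian_mse} and the standard inverse first moment $\int x^{-1}\,\mu^{\lambda}_{MP}(dx)=\tfrac{1}{1-\lambda}$ (valid since $\lambda<1$ keeps the support away from $0$). One cosmetic remark: with your stated convention $m(z)=\int (x-z)^{-1}\mu^{\lambda}_{MP}(dx)$, the quadratic you quote gives $m(0)=+\tfrac{1}{1-\lambda}$ directly, so the two sign flips in your Stieltjes-transform aside cancel and the final identity and result are unaffected.
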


To state the result for the Nuclear case,it is necessary to introduce the Appel hypergeometric function $F_1$\citep{appel}, which generalizes the classical Gauss hypergeometric function to two variables.

\begin{proposition}
\label{nuclear_gen_error}
For the Nuclear estimator in the spherical Gaussian setup, the test error in the thermodynamic limit can be expressed in terms of the Appel hypergeometric function $F_1$:
\[ Err_1(\alpha) = \begin{cases} 
      {\frac {\sigma^2\lambda}{1-\lambda}} & \alpha\leq \lambda_- \\
      {\frac {\sigma^2\lambda}{1-\lambda}}+\lambda\beta^2 CDF_{\lambda}(\alpha)+\sqrt{\alpha-\lambda_-}\sum_{r\in\{-1,1,2\}} c_r(\alpha) F_1({\frac 3 2},1-r,-{\frac 1 2},{\frac 5 2},1-{\frac {\alpha}{\lambda_-}},{\frac {\alpha-\lambda_-}{\lambda_+-\lambda_-}}) &  \lambda_-\leq \alpha\leq \lambda_+\\
      {\frac {\lambda(1+\lambda)}{\alpha^2}} +{\frac {\lambda \sigma^2-2\lambda \alpha\beta^2}{\alpha^2}}+\lambda\beta^2 & \alpha\geq \lambda_+
   \end{cases}
\]
where $\lambda_{\pm}$ are the limits of the Marchenko-Pastur support: $\lambda_{\pm}=(1\pm\sqrt{\lambda})^2$, $CDF_{\lambda}$ is the cumulative distribution function of $\mu_{MP}$ and $c_r(\alpha)$ are certain rational functions of $\alpha$.

\end{proposition}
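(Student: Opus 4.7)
My plan is to start from the integral formula in Proposition \ref{prop_gaussian_mse} specialized to the Nuclear case $f_\alpha(x)=\max(x,\alpha)$, and split into three regimes according to where $\alpha$ sits relative to the Marchenko-Pastur support $[\lambda_-,\lambda_+]$. Throughout I will use the explicit MP density $\mu_{MP}^\lambda(dx)=\frac{1}{2\pi\lambda x}\sqrt{(\lambda_+-x)(x-\lambda_-)}\,dx$ on $[\lambda_-,\lambda_+]$, together with the standard moment identities $\int d\mu_{MP}=\int x\,d\mu_{MP}=1$, $\int x^2\,d\mu_{MP}=1+\lambda$, and $\int(1/x)\,d\mu_{MP}=1/(1-\lambda)$, which will handle the boundary cases as well as the ``closed-form'' pieces of the middle case.

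The cases $\alpha\leq\lambda_-$ and $\alpha\geq\lambda_+$ are immediate. For $\alpha\leq\lambda_-$, $f_\alpha(x)\equiv x$ on the support, so the bias term vanishes and the variance term collapses to $\sigma^2\int(1/x)\,d\mu_{MP}=\sigma^2/(1-\lambda)$. For $\alpha\geq\lambda_+$, $f_\alpha\equiv\alpha$ throughout the support, so I expand $(1-x/\alpha)^2=1-2x/\alpha+x^2/\alpha^2$ and apply the three polynomial moment identities to read off the third branch directly.

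The substantive case is $\lambda_-\leq\alpha\leq\lambda_+$. I split the domain at $\alpha$. On $[\alpha,\lambda_+]$, $f_\alpha(x)=x$ so the integrand collapses to $\sigma^2/x$; writing this as the full MP integral minus its restriction to $[\lambda_-,\alpha]$ produces the $\lambda\sigma^2/(1-\lambda)$ offset plus an incomplete moment of order $r=-1$. On $[\lambda_-,\alpha]$, $f_\alpha=\alpha$, and expanding the integrand gives a constant $\beta^2$ term, which integrates to the $\lambda\beta^2 CDF_\lambda(\alpha)$ summand of the stated formula, plus monomials in $x$ of degrees $1$ and $2$. Collecting, the remaining task is to evaluate the incomplete moments $M_r(\alpha):=\int_{\lambda_-}^\alpha x^r\,d\mu_{MP}^\lambda$ for $r\in\{-1,1,2\}$, with rational-in-$\alpha$ coefficients read off from the expansion.

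To evaluate $M_r(\alpha)$, I apply the substitution $u=(x-\lambda_-)/(\alpha-\lambda_-)$, which sends $[\lambda_-,\alpha]$ onto $[0,1]$ and factorizes $\sqrt{(x-\lambda_-)(\lambda_+-x)}=\sqrt{(\alpha-\lambda_-)(\lambda_+-\lambda_-)}\sqrt{u(1-yu)}$ with $y=(\alpha-\lambda_-)/(\lambda_+-\lambda_-)$, while rewriting $x=\lambda_-(1-\tilde x\,u)$ with $\tilde x=1-\alpha/\lambda_-$. After absorbing the $1/x$ weight from the density, $M_r(\alpha)$ becomes an explicit prefactor (carrying the $\sqrt{\alpha-\lambda_-}$ that appears in the statement) times $\int_0^1 u^{1/2}(1-u)^0(1-yu)^{1/2}(1-\tilde x u)^{r-1}\,du$, which is precisely Euler's integral representation of $F_1(3/2,1-r,-1/2,5/2;\tilde x,y)$, matching the parameter pattern in the proposition exactly. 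The main obstacle is bookkeeping: tracking the prefactors through the change of variables and collecting the coefficients of $M_{-1},M_1,M_2$ into the three rational functions $c_r(\alpha)$ advertised in the theorem. Once the Euler representation is recognized, no further analytic input is required.
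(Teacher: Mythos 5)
Your proposal is correct and follows essentially the same route as the paper's proof: the same case split at $\lambda_\pm$, the same decomposition of the middle-regime integral at $\alpha$ into a CDF term plus incomplete moments $\int_{\lambda_-}^{\alpha}x^r\,d\mu$ for $r\in\{-1,1,2\}$, the same substitution $u=(x-\lambda_-)/(\alpha-\lambda_-)$, and the same identification with the Euler integral representation of $F_1(\tfrac32,1-r,-\tfrac12,\tfrac52;\,\cdot\,,\cdot\,)$ (which, as in the paper, also serves to define the value when $|1-\alpha/\lambda_-|>1$). No gaps beyond the bookkeeping you already flag.
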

See \ref{nuclear_gen_error_proof} for proof, and precise definition of $F_1$.
\subsubsection{Diagonal ensemble}
\label{subsec_diag}
While the spherical Gaussian case is an instructive starting point, the assumptions are clearly somewhat limiting. In particular, it implies a very specific functional form of the spectral density, which may not be a good match to real data. Thus, we analyze another random matrix ensemble which allows us the flexibility to specify an essentially arbitrary spectral density. On the one hand, this new ensemble has its own limiting assumptions that the Gram matrix $X^TX$ is almost surely diagonal, and the observations (i.e. rows of $X$) are no longer independent. But on the other, having another random matrix model allows us to assess whether qualitative observations about the spherical Gaussian case generalize to other settings.

We now describe the random matrix ensemble. We first specify some distribution $\nu$ supported on $[0,1]$, which will be the spectral density. We also specify some distribution $\nu_{noise}$ supported on a bounded interval of $\mathbb{R}_+$, which will act as multiplicative random noise on the spectrum; we require that $E_{x\sim \nu_{noise}}x=1$. To generate training and testing data, we first sample $\lambda_i\sim \nu, i=1,\dots, d$ independently, and $s_i\sim \nu_{noise}$. We then set 
\begin{equation}X_{tr}=X_1 diag(\sqrt{\lambda_is_i}), X_{test}= X_2 diag(\sqrt{\lambda_i})\end{equation}
where $X_1$ and $X_2$ are drawn independently from the Stiefel manifold $O(n,d)$ of $N\times d$ orthogonal matrices. That is $X_i^TX_i=I_d$. We observe that, as noted above: 1.) the Gram matrix $G=X_{tr}^TX_{tr}$ is diagonal almost surely, and 2.) the limiting eigenvalue density of $G$ is given by $\nu$. 

The regression targets are then constructed as before: 
$Y_{tr}=X_{tr}\beta_0+\epsilon, Y_{test}=X_{test}\beta_0$. The test error is defined as
\begin{equation}MSE:=N^{-1}|X_{test}\hat{\beta}-X_{test}\beta_0|^2 \end{equation}

Note that this is slightly different from the expression for MSE in the spherical Gaussian case (Equation \ref{mse_spherical}). The basic reason is that the observations (i.e. rows) are here not independent. Thus, in contrast to the spherical Gaussian case, where we could sample one test observation at a time and average, here we sample a batch of N (dependent) test observations at a time, and compute the average error over those N.

As before, we will be interested in the value of this error, marginalized over all of the randomness, and taken in the thermodynamic limit.

\begin{proposition}
\label{prop_mse_diag}
In the thermodynamic limit of the above random matrix ensemble, the average test error $Err_p(\alpha)=\mathbb{E}_{X_{tr},X_{test},\epsilon,s} MSE$ can be expressed as 
\begin{equation}Err_p(\alpha)=\lambda \int_0^1  \beta^2  x(1-{\frac {x}{f_{\alpha}(x)}})^2+ \sigma^2  {\frac {x^2}{f_{\alpha}(x)^2}} \nu(dx)\end{equation}
where $f_{\alpha}$ has the same form as in Proposition \ref{prop_gaussian_mse}. 
\end{proposition}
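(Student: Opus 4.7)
The plan is to exploit the diagonal structure of the ensemble to reduce the computation to sums of independent scalar terms, then apply the law of large numbers. Since $X_1^T X_1 = I_d$, the training Gram matrix is $G = X_{tr}^T X_{tr} = \mathrm{diag}(\lambda_i s_i)$, which is already diagonal in the standard basis. Theorem \ref{main_theorem} asserts that the bias-constrained estimator has the form $\hat{G}^{-1} X_{tr}^T$ with $\hat{G}$ simultaneously diagonalizable with $G$; hence $\hat{G}$ is itself diagonal in each of the three special cases, with diagonal entries $f_\alpha(\lambda_i s_i)$. Substituting $Y_{tr} = X_{tr}\beta_0 + \epsilon$ then yields the coordinate-wise expression $\hat{\beta}_i - \beta_{0,i} = \left(\lambda_i s_i / f_\alpha(\lambda_i s_i) - 1\right)\beta_{0,i} + (X_{tr}^T\epsilon)_i/f_\alpha(\lambda_i s_i)$.

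Next, using $X_2^T X_2 = I_d$, the test error collapses to a diagonal quadratic form $|X_{test}(\hat{\beta} - \beta_0)|^2 = \sum_i \lambda_i (\hat{\beta}_i - \beta_{0,i})^2$. Taking expectation over $\epsilon$ kills the cross term (since $\epsilon$ is centered), and the noise contribution simplifies via $\mathbb{E}_\epsilon (X_{tr}^T\epsilon)_i^2 = \sigma^2 G_{ii} = \sigma^2 \lambda_i s_i$. Combining these pieces gives $\mathbb{E}_\epsilon \mathrm{MSE} = N^{-1}\sum_i \lambda_i\left[(1 - \lambda_i s_i/f_\alpha(\lambda_i s_i))^2 \beta_{0,i}^2 + \sigma^2 \lambda_i s_i/f_\alpha(\lambda_i s_i)^2\right]$, which is already a sum of independent scalar random variables.

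Finally, in the thermodynamic limit $d/N \to \lambda$, I would invoke the law of large numbers: the summands are iid functions of the independent triples $(\lambda_i, s_i, \beta_{0,i})$, so $N^{-1}\sum_i(\cdot) \to \lambda\cdot \mathbb{E}[\cdot]$ almost surely under the bounded-support assumptions on $\nu, \nu_{noise}$ together with $\mathbb{E}\beta_{0,i}^2 = \beta^2$. The step that will require the most care is reconciling the resulting double integral over $(\lambda, s)$ with the single-variable integral against $\nu$ in the proposition's statement: the argument of $f_\alpha$ involves the product $\lambda s$, while the prefactor $\lambda$ in the bias term and $\lambda^2 s$ in the variance term do not reduce to functions of that product alone. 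Recovering the one-dimensional formula as written requires either specializing to $\nu_{noise} = \delta_1$ or reinterpreting $\nu$ as the pushforward spectral density of $G$, in which case the $s$ variable is absorbed into $\nu$ and the integral collapses to exactly the stated form.
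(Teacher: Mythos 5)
Your route is the same as the paper's: reduce everything to diagonal form using $X_1^TX_1=X_2^TX_2=I_d$, marginalize $\epsilon$ to kill the cross term, then marginalize the spectral noise and pass to the limit $d/N\to\lambda$. Every step you carry out is correct for the ensemble as literally defined (noise $s_i$ attached to the \emph{training} spectrum): with $G=\mathrm{diag}(\lambda_is_i)$ and $\hat{G}=\mathrm{diag}(f_\alpha(\lambda_is_i))$, your expression $N^{-1}\sum_i\lambda_i\bigl[(1-\lambda_is_i/f_\alpha(\lambda_is_i))^2\beta_{0,i}^2+\sigma^2\lambda_is_i/f_\alpha(\lambda_is_i)^2\bigr]$ is exactly right, and its limit is a genuine double integral over $(x,s)$ with $f_\alpha(xs)$ inside. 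The tension you flag at the end is therefore real and not an artifact of your argument. But note that your second proposed repair (absorbing $s$ into $\nu$ as the pushforward spectral law of $G$) does not actually recover the stated formula, for precisely the reason you yourself give: the prefactors $\lambda_i$ and $\lambda_i^2 s_i$ are not functions of the product $\lambda_is_i$ alone. Of your two patches, only the trivial-noise specialization $\nu_{noise}=\delta_1$ makes your limit coincide with the proposition as written.

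The way the paper itself reaches the stated formula is different: its algebra corresponds to the \emph{opposite} placement of the noise. In the appendix proof the test matrix is written as $X_2DS$ and $f_\alpha$ is evaluated at the clean eigenvalues $\lambda_i$ (equivalently, training Gram $\mathrm{diag}(\lambda_i)$ and noisy test spectrum $\lambda_is_i$), so the $s_i$ appear only as multiplicative prefactors outside $f_\alpha$, giving $N\cdot MSE=\sum_i s_i\lambda_i(\beta_0)_i^2(\lambda_i/f_\alpha(\lambda_i)-1)^2+\sigma^2\sum_i s_i\lambda_i^2/f_\alpha(\lambda_i)^2$, after which independence and $\mathbb{E}[s]=1$ remove the $s_i$ and yield the one-dimensional integral against $\nu$. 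So the proposition is exact for the ensemble the proof actually analyzes, while your derivation shows it is not exact for the ensemble as defined in Section 2.2.2; the mismatch is an internal inconsistency of the paper (which also surfaces in its claim that the limiting spectrum of $G$ is $\nu$), not a gap in your reasoning. Two minor further points: since the target is $\mathbb{E}[MSE]$, no law of large numbers is needed, only expectations; and you need not posit i.i.d.\ random $\beta_{0,i}$ with $\mathbb{E}\beta_{0,i}^2=\beta^2$ --- the paper keeps $\beta_0$ fixed and uses exchangeability of the $\lambda_i$ to pull out $|\beta_0|^2/d\to\beta^2$, which is the cleaner way to finish.
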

See \ref{mse_proof} for proof. Note the similarity to the result of Proposition \ref{prop_gaussian_mse}. The basic reason for this is that both ensembles have a similar rotational invariance, which allows the MSE to be expressed as a sum over the spectrum of $X_{tr}$. Besides the integrating measure, the only difference is an extra factor of $x$ in the Diagonal case.

In our experiments, we specialize to  power law distributions $d\nu(x)/dx\propto x^{\gamma-1}\textbf{1}_{x\in [0,1]}$, as many real covariance matrices have approximately this structure \citep{harris,qin,liu_pw,ruderman}. In this case, the expressions for the test error can again be simplified to analytic expressions that do not involve integrals; the details are straightforward and left to the interested reader\footnote{The Ridge case involves the Gauss hypergeometric function, the other two only require elementary calculus}.

\subsubsection{On the optimal Regularization form}
\label{opt_reg_form_section}
It is not hard to see that the derivations in the above two sections go through for any estimator of the form $\hat{\beta}=\hat{G}^{-1}X^TY$, where the eigenvalues of $\hat{G}$ are related to those of $G$ through some fixed point-wise transform $\lambda_i(\hat{G})=f(\lambda_i(G))$, and $\hat{G}$ is simultaneously diagonalizable with $G$. It is thus natural to ask what would be optimal estimator in this class, \textit{if we are allowed to select f arbitrarily}? By formally differentiating the integrand with respect to $f$ and setting equal to zero\footnote{this can be justified using results from Calculus of Variations}, it is easy to see that in both of the above random matrix models, the optimal form of $f$ corresponds to Ridge regression with regularization strength $\sigma^2/\beta^2$. 

Does this mean that Ridge regression will always perform better in practice than any other estimator? Not necessarily. For the above estimator is an \textit{oracle estimator} in the sense that it requires knowledge of the ground-truth $\beta$ and $\sigma$, which are typically not known. In practice, we would estimate the optimal ridge parameter using,e.g., cross-validation, and hope that our estimate is close to the oracle-optimal value $\sigma^2/\beta^2$. In the finite-sample regime, there will be some noise in the cross-validated errors, so this essentially amounts to trying to minimize $Err(\alpha)$ given only access to a small number of noisy estimates of the underlying function. Additionally, there will be ``resolution noise" caused by the fact that we will typically only try to estimate the test error for a small number of values of $\alpha$. A very rough rule of thumb is that if we sample $n$ values $\alpha_i$, all within some small distance $\delta$ to the minimum, then $min_i Err(\alpha_i)\approx \mu+{\frac {(\kappa\delta)^2}{(n+1)(n+2)}}$, where $\mu=\min_{\alpha}Err(\alpha)$ is the true minimum, and $\kappa^2$ is the Hessian  of $Err(\alpha)$ at the minimum (see \ref{curvature_rot} for justification). We conclude that the ``effective minimum" that we would find using cross validation is penalized by higher values of curvature, especially for small $n$.

\subsubsection{Experimental Validation}
In order to validate the above formulas, we generated synthetic datasets according to the random matrix ensembles from sections \ref{subsec_gaussian} and \ref{subsec_diag} , and compared the average test error with the theoretically predicted values. For each such dataset, we used $N=100$ training observations, and obtained the empirical test error by averaging over $100$ datasets. 
We did this for both the spherical Gaussian case, as well as the diagonal case with a power law spectral density $d\nu/dx\propto x^{\gamma-1}\textbf{1}_{x\in [0,1]}$. In Figure \ref{fig_pred_vs_emp} we show several plots with different values of the other hyperparameters; in all cases we see a very close correspondence between the empirical and predicted errors. In accordance with the discussion in the previous section, we can also see visually that the minima of the Nuclear estimator are often noticeably flatter than those of the Ridge, while also being only moderately higher. 
\begin{figure}[h]
\begin{center}
\includegraphics[width=10cm,height=3cm]{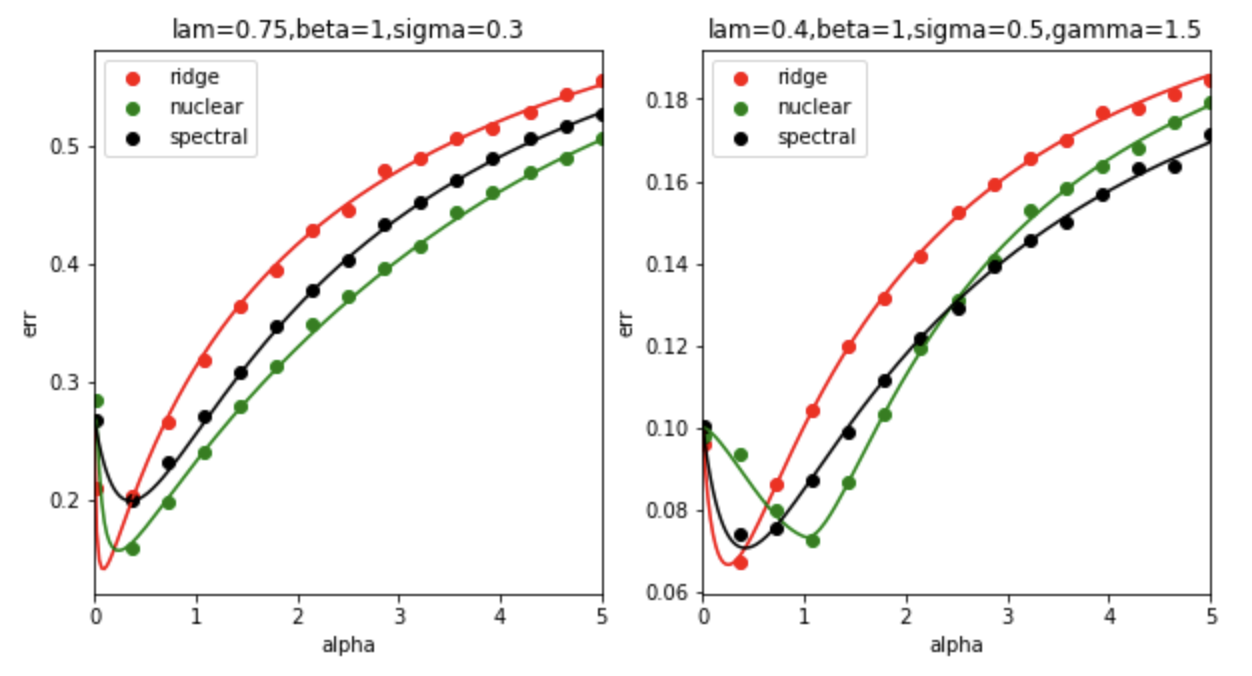}

\end{center}
\caption{Predicted vs. empirical test errors, for the spherical Gaussian case (left) and diagonal case with power law spectrum (right). Each dot represents an average over 20 different datasets, each with $N=100$ training observations.Note that the minima for the Ridge curve are notably steeper than for the other two.}
\label{fig_pred_vs_emp}
\end{figure}
\section{Experiments}
\subsection{Loss Basin Geometry Comparison}
Here, we aim to systematize the observation that the Nuclear estimator can have slightly higher, but much flatter, minima than the Ridge estimator. To do so, we simply fix the hyperparameters $\sigma,\lambda$, and consider the theoretical test error as a function of $\alpha$. We record both the minimal value over $\alpha$ $m:=min_{\alpha} Err(\alpha)$ as well as the curvature at the minimum: $\kappa:=\sqrt{\partial^2 Err(\alpha)/\partial\alpha^2}|_{\alpha=\alpha_{min}}$. For all cases, we fix $\beta=1$. For the sake of having an interpretable frame of reference, we report these values as percentage increases relative to the corresponding values for the Ridge estimator, shown in Figure \ref{gaussian_basin}. For the nuclear case, we can plainly see that the minima of the test loss are nearly as deep as for the the Ridge (within a few percentage points), however the minima are often very substantially flatter (corresponding to negative values in the table). The Spectral estimator tends to also have flatter minima than Ridge, but with a less favorable depth tradeoff compared to the Nuclear. Moreover, we see a pronounced effect of $\sigma$, with increasing values tending to decrease the gap between the depths of Nuclear and Ridge minima, while also decreasing the difference in curvature. As in Figure \ref{diag_basin}, we see that these observations largely carry over to the case of a power-law spectral density, indicating that they are not simply an incidental property of the spherical Gaussian model. See \ref{min_estimation} for more technical details on the calculation of these values.
\begin{figure}[h]
\begin{center}
\includegraphics[width=16cm]{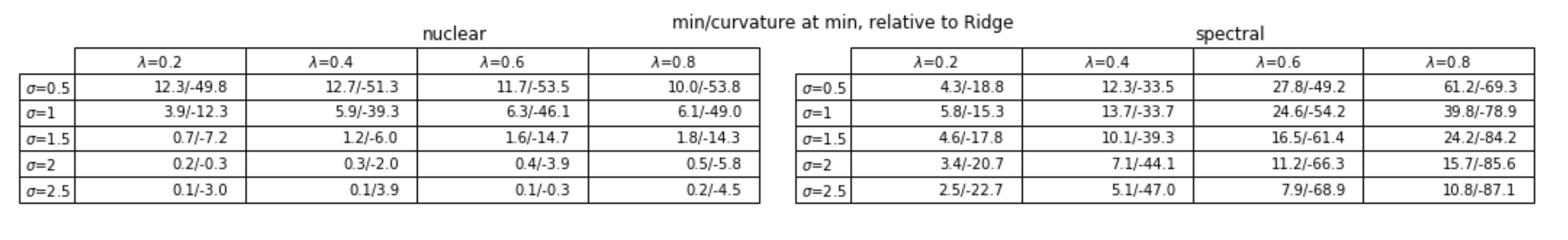}

\end{center}
\caption{Quantification of loss basin geometry for the spherical Gaussian model. Each entry in the table shows the minimal attainable test error/curvature at the minimum, where each is expressed as a percentage increase relative to the corresponding value for the Ridge estimator.}
\label{gaussian_basin}
\end{figure}

\begin{figure}[h]
\begin{center}
\includegraphics[width=14cm,height=9cm]{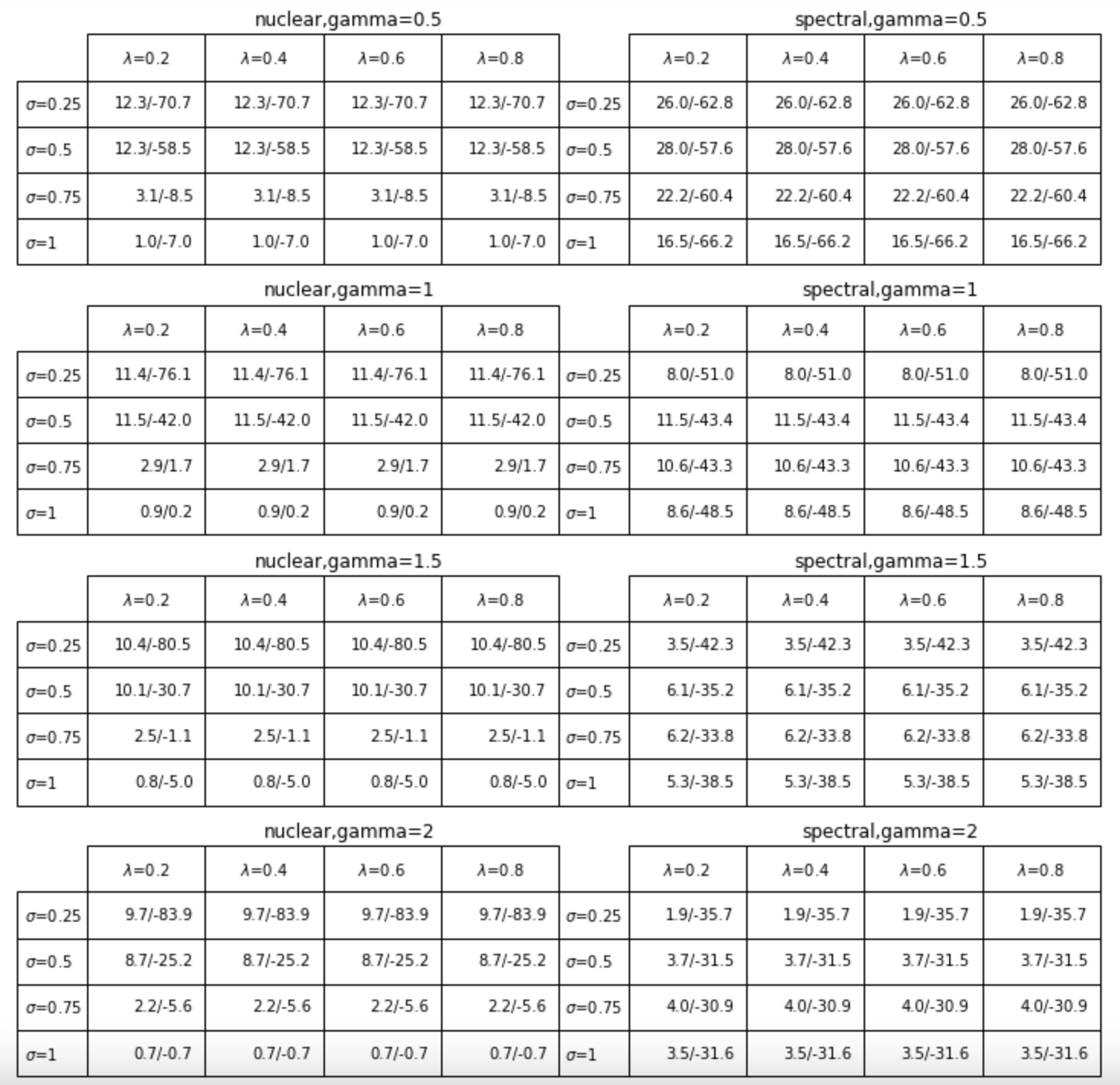}

\end{center}
\caption{Same as Figure \ref{gaussian_basin} except for the Diagonal matrix model with power law spectral density.}
\label{diag_basin}
\end{figure}

\subsection{Simulation Studies}
The purpose of this section is to show that the observations about the loss basin geometry can have practical consequences in settings where the regularization strength is selected by cross-validation. 
\subsubsection{Gaussian Predictors}
\label{gaussian_predictor}
We generated training data by sampling $X_i\sim N(0,C(\rho))$, where $C(\rho):=(1-\rho)I_d+\rho\textbf{1}\textbf{1}^t$. We then generated a ground truth coefficient vector by sampling $\beta_0\sim N(0,I_d)$, and training targets $y_i$ by $y_i\sim N(\langle X_i,\beta_0\rangle,\sigma^2)$. The testing data were generated similarly, using the same $\beta_0$. In all cases, we used $N=100$ training examples, and $5000$ testing examples. For each combination of hyperparameters $d,\sigma,\rho$, we generated 100  train/test datasets. For each model (Spectral, Ridge, Nuclear) and each individual dataset, we used 3-fold cross validation to select the best-performing $\alpha$ on the training set. We then refit the model on the entire training set, and evaluated its performance on the test set. The  set of allowable $\alpha$ values considered in the cross-validation was the same for all models, and consisted of 9 equally logarithmically spaced values spanning $10^{-4}$ to $10^6$.

In Figure \ref{fig_cv} we plot the best-performing model for each combination of hyperparameters, for two different definitions of ``best". In the top row, we show the model that attains the lowest average test error in each cell. In the bottom, we show the model that is most likely to ``win" on any given dataset. That is, suppose we evaluate model $m$ on dataset $i$, and obtain a test error of $MSE_{mi}$. The top row shows $argmin_m \mathbb{E}_jMSE_{mj}$, while the bottom row shows $Mode(\{argmin_m MSE_{mj}\}_j)$. We see that in terms of average error, the Ridge and Nuclear are essentially matched, with Spectral being a distant 3rd. The Nuclear however is overwhelmingly likely to be the winner on a given dataset.

\begin{figure}[h]
\begin{center}
\includegraphics[width=16cm,height=4cm]{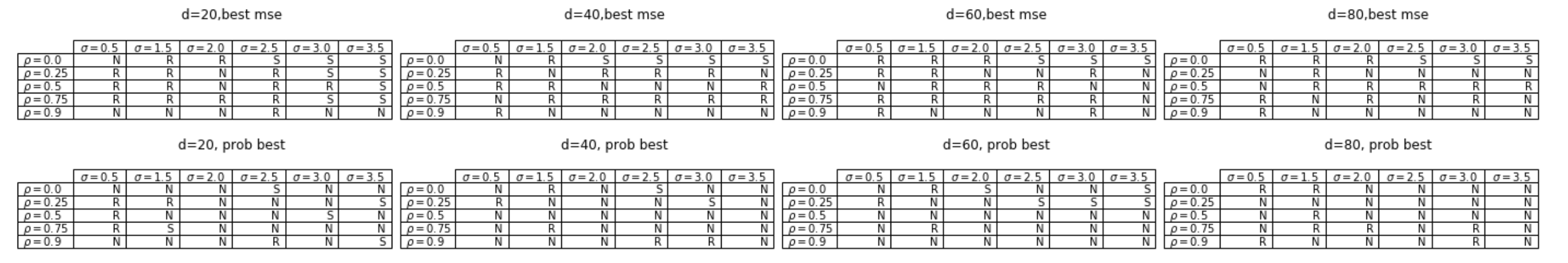}

\end{center}
\caption{Best performing models with cross-validation on Gaussian predictors. Top: best average error. Bottom: highest probability of winning. Each cell is an aggregate over 100 datasets.}
\label{fig_cv}
\end{figure}

\subsubsection{Regression of Nonlinear functions using Random Fourier features}
In order to generalize our observations beyond the Gaussian features case, we consider a more complex setup in which the true relationship between the predictors and the targets is non-linear, and the regression is performed in the feature space of a universal approximating kernel. To be more specific, the process for generating data was as follows. We first pick integers $d,d_{rbf}$ which define the dimension of the observed space, and of the kernel feature map, respectively. We sampled a training matrix $X$ of size $N\times d$, in which each entry has variance $1/d_{rbf}$. We then sampled a non-linear function $f:\mathbb{R}^d\to\mathbb{R}$ in a manner described below, and generated training targets as $y_i=f(X_i)+\sigma\epsilon_i$, where $X_i$ is the ith row of $X$ (i.e. ith observation) and $\epsilon_i$ is unit Gaussian. Testing data was generated similarly, using the very same function $f$ (and setting $\sigma=0$ as in the previous analyses). Since the relation between $X$ and $Y$ is non-linear, we do not directly fit the models on $X$, but rather first fit a Random Fourier features (RFF) model \citep{rahimi} on $X$, using $d_{rbf}$  features \footnote{The RFF fit was done using the class \texttt{sklearn.kernel\char`_approximation.RBFSampler}.}.We then fit the linear models on $RFF(X),Y$, where $RFF(X)$ is the $N\times d_{rbf}$ matrix obtained by applying the Random Fourier features mapping. To estimate test error, we first transform $X_{test}$ using the \textit{same} RFF mapping, and consider $|RFF(X_{test})\hat{\beta}-Y_{test}|^2$. Finally, the non-linear function $f$ was defined as follows:
\begin{equation}f(x)=\sum_{k=1}^{d_{rbf}}cos(2\pi k\langle x,v_i\rangle)/k^2\end{equation}
where $v_i$ are sampled uniformly from the unit sphere $S^{d-1}\subset\mathbb{R}^d$. We fixed $d=10$ and $N=100$ in all cases. For each value of $d_{rbf}$ and $\sigma$, we generated 100 datasets in this way. We did not consider the Spectral estimator for this experiment, because it does not naturally accomodate the overparametrized case.As in Figure \ref{fig_cv_rff}, we see that the cross-validated Nuclear estimator can again outperform the cross-validated Ridge, with the effect becoming more pronounced for larger values of $\sigma$.

\begin{figure}[h]
\begin{center}
\includegraphics[width=12cm,height=4cm]{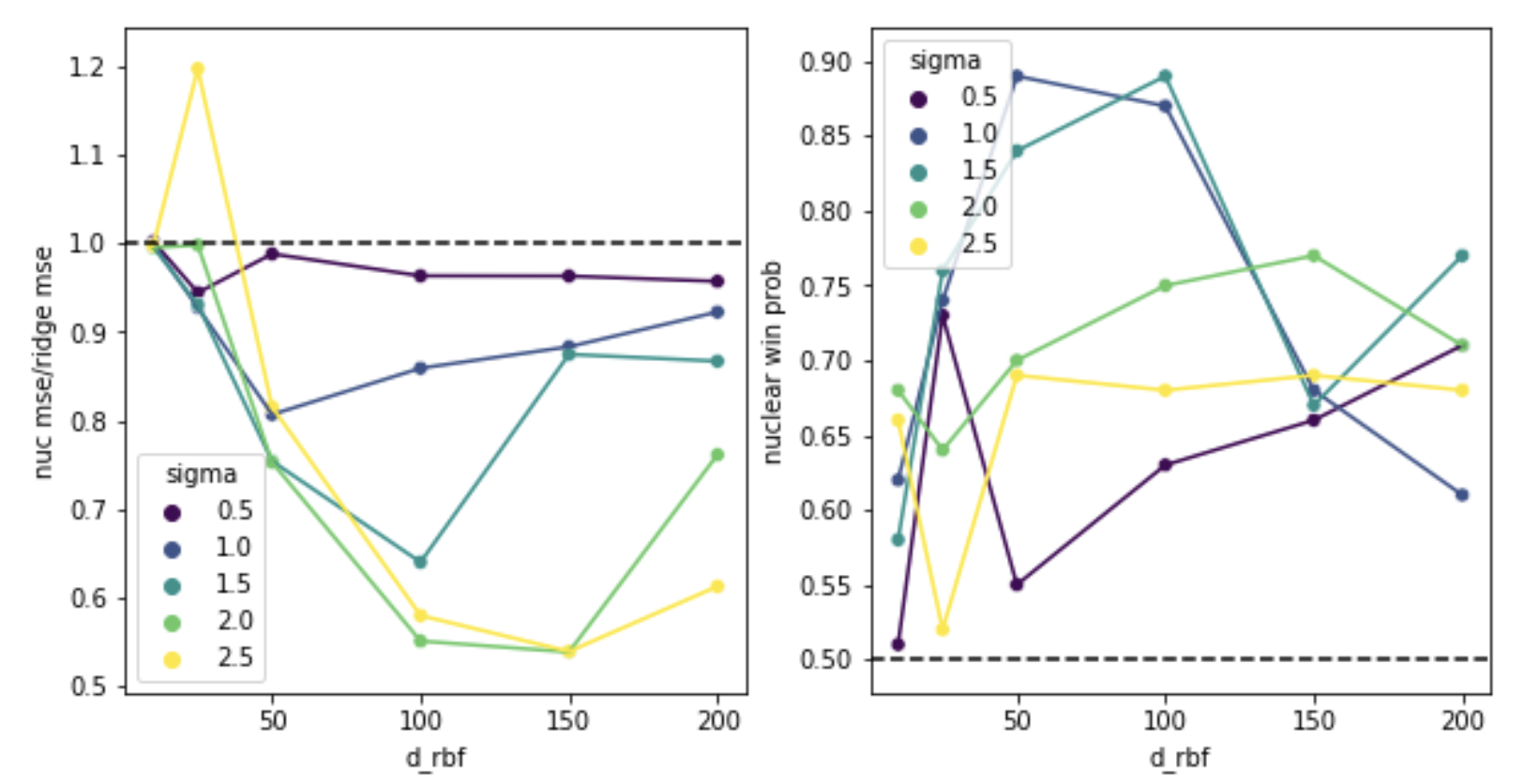}

\end{center}
\caption{Test set performance for cross-validated Nuclear estimator on the Random Fourier features data. Left shows average test error expressed as a ratio relative to average Ridge test error. Right shows probability of ``winning" on a given dataset. Each dot represents an aggregate over 100 datasets.}
\label{fig_cv_rff}
\end{figure}

\section{Related Work}

Our work relates to several distinct areas. On the one hand, there are questions from classical statistics. For instance, several other sorts of generalized Gauss-Markov theorems have appeared in the literature, although they generalize along different axes than our formulation. for example allowing non-spherical errors \citep{kourouklis},predictor collinearity \citep{lewis}, random effects \citep{shaffer},or  non-linear estimators \citep{hansen} 
.Similarly,the Nuclear and Spectral estimators bear a very strong resemblance to various notions from classical statistics, most notably Principal Components Regression (PCR) \citep{kendall_pcr}, Stein Shrinkage \citep{stein} respectively. Finally, the work of \citet{hocking} is particularly relevant and worthy of further discussion here-we give a more detailed comparison in \ref{hocking_comp}.

Lately, there there has also been renewed interest in the study of high-dimensional regression models and the effects of various kinds of regularization thereupon. Most such works focus on Ridge or kernel Ridge regression (\citep{mei,dobriban,canatar,dicker,belkin,hu_krr,nakkiran}). Ridge regression has also proven a powerful lens to understand other techniques such as  dropout \citep{wager}, data augmentation\citep{lin} and early stopping \citep{ali}. This pervasiveness of the ridge underscores our claim that a detailed understanding of other kinds of regularization than L2 in the linear case could be used to study more complex regularizations in non-linear settings. 

Some other work has analyzed other regularizations than ridge such as \citet{bayati,samet}for Lasso. Theoretical properties of nuclear norm have mainly been studied in the context of matrix completion, in which it is often used as a surrogate for matrix rank \citep{candes,koltchinskii}, but see \citet{hu} for other applications. Another setting in which rank penalties appear in regression problems is Low Rank Regression \citep{bunea,izenman}, although this is fairly different than our setting as it has to do with regularizing the \textit{outputs} of a regression problem with multiple dependent variables.There is also some modern work on theoretical properties of the Principal Components Regression estimator \citep{xu_pcr}; optimizing the number of PCR components has a similar flavor to optimizing $\alpha$ in the Nuclear estimator.

Another line of related work is that while we have here studied the behavior of flat minima in the context of linear models and cross-validation, this same concept has also been quite popular as an explanatory tool in the context of generalization of neural networks \citep{baldassi,mulayoff,hochreiter}, in which the randomness in the minimization procedure comes from the stochasticity of gradient descent. 
\section{Discussion}
\label{discussion_section}
We have introduced two simple Linear models:Spectral and Nuclear regression, and have shown how the forms of both of these models, together with Ridge regression, may all be derived from an extension of  Gauss-Markov Theorem for Schatten norm constraints on the bias operator. By considering theoretical error curves as a function of regularization strength, we observed in multiple different random matrix ensembles that the Nuclear model can often attain minima that are much flatter than those of the Ridge model, while being nearly as deep. Finally, we showed that this tradeoff can have practical consequences in a cross-validation setting, for both Gaussian and non-Gaussian features.

It should be noted that the particulars of the cross-validation results depend on the number of $\alpha$ values used in the grid search, among other factors; and as noted before, the effect of the curvature at the minimum decreases as more values of $\alpha$ are used. However, increasing the number of $\alpha$ values has other potential costs such as risk of overfitting or increased computational cost. But more fundamentally,our intention with the simulation results is not to claim that the Nuclear estimator should always be preferred over the Ridge, but rather to show that it \textit{can} outperform in some practically relevant setups, and to use our theoretical analysis to give a precise characterization of why this happens.

There are many potential future directions of this work, such as further understanding of the Spectral and Nuclear regularizations in the kernelized or overparametrized regime, or using insights from the Nuclear case to understand other situations that involve rank constraints such as matrix completion. More broadly, we hope that this framework will provide a test case to understand less common regularizations such as $\|\cdot\|_1$ and $\|\cdot\|_{\infty}$, with the goal of transferring the insights gained thereby to the study of high-dimensional non-linear models.

\newpage
\bibliography{iclr2024_conference}
\bibliographystyle{iclr2024_conference}

\newpage
\appendix
\section{appendix}
\subsection{Code availability}
Code for the simulations is available at \url{https://github.com/SimonSegert/specreg}.
\subsection{Further comparison with Hocking et. al.}
\label{hocking_comp}
The paper, similarly to us in spirit, introduces a class of biased linear estimators defined by a family of constrained optimization problems, and derive in this way the forms of Ridge, Stein Shrinkage, and PCR regressors. However, a closer inspection reveals that their setup is considerably different from ours. 

For simplicity, we will make the assumption that $G=X^TX$ is diagonal, as this is sufficient to illustrate the differences. The form of estimators considered in Hocking et. al. is
\begin{equation}\hat{\beta}=v\circ \Sigma\end{equation}
where as before, $\Sigma$ is the vector of eigenvalues of $G$ and $\circ$ is Hadamard (element-wise) product. Immediately we see a difference that their setup has only $d$ free variables, from the vector $p$, whereas we have $N\times p$ free variables in the matrix $L$. 

By a bit of algebra,
\begin{eqnarray}
\hat{\beta}&=&\Sigma \circ v=(\Sigma\circ v\oslash X^TY)\circ X^TY\\
& = & diag(\Sigma)diag(v)diag(X^TY)^{-1}X^TY\\
&:=& DX^TY 
\end{eqnarray}
where $\oslash$ is element-wise division, and $D$ is a diagonal matrix. Thus they effectively assume from the outset that $\hat{\beta}=DX^TY$ for some diagonal matrix $D$, whereas we a priori allow $\hat{\beta}=LY$ for $L\in\mathbb{R}^{N\times d}$ and \textit{derive} that $L$ is necessarily of this form. 

They then show that each of the aforementioned estimators can be derived as the solution to an optimization of the form $\min_{v\in C} L(v)$, where $L$ is a loss function and $C$ is a constraint set, that depends on the estimator. The forms of the constraint sets are rather \textit{ad-hoc} and the authors do not appear justify the form from more basic principles, or to relate them to each other in a meaningful way. By contrast, the forms of our constraints arise naturally from a standard family of matrix norms. 

Finally, they do not precisely characterize the bias of each of the estimators, whereas our results characterize the estimators as being optimal for a fixed level of bias.

 \subsection{Rule of Thumb for depth vs. curvature}
\label{curvature_rot}
In order to get insight into the tradeoff between curvature and depth, we consider a highly simplified setup which nonetheless captures some key features of selecting $\alpha$ through cross-validation. Let us model the function of average test error vs regularization strength as a parabola $f(x)=\kappa^2x^2/2+
\mu$, for $-\delta\leq x\leq \delta$. Thus $\kappa$ controls the curvature and $\mu$ controls the depth of the minimum. We will model cross-validation as taking n iid samples 
$X_i\sim Unif(-\delta,\delta)$ and then estimating the minimum of $f$ as $\hat{\mu}=min_i(f(X_i))$. Note that in a real cross-validation setup there is the additional complication that we cannot perfectly observe the value $f(x)$ due to finite-sample variability; we do not model this effect here. 

What is the distribution of $\hat{\mu}$? Well, for $\mu<y<\delta^2\kappa^2/2+\mu=f_{max}$, we have
\begin{equation}\mathbb{P}(f(X_i)>y)=\mathbb{P}(|X_i|>\sqrt{2(y-\mu)}/\kappa={\frac 1 {2\delta}}(2\delta-2\sqrt{2(y-\mu)}/\kappa)\end{equation}
So, 
\begin{equation}\mathbb{P}(\hat{\mu}>y)=\mathbb{P}(f(X_i)>y)^n=(1-\delta^{-1}\sqrt{2(y-\mu)}/\kappa)^n\end{equation}

The expected value is 
\begin{equation}\mathbb{E} \hat{\mu}=\int_0^{\infty} \mathbb{P}(\mu>y)dy=\mu+\int_{\mu}^{\delta^2\kappa^2/2+\mu}(1-\sqrt{(2(y-\mu)\kappa^{-2}\delta^{-2}})^ndy\end{equation}
Letting $u=\sqrt{2(y-\mu)/(\kappa^2\delta^2)}$,the integral becomes 

\begin{equation}\mu+\kappa^2\delta^2\int_{0}^1 (1-u)^nudu=\mu+\kappa^2\delta^2Beta(2,n+1)=\mu+{\frac {(\kappa\delta)^2}{(n+1)(n+2)}}\end{equation}

In the large sample limit, the contribution of the curvature vanishes, and we can exactly find the minimum. However, for finite samples this is not the case, and it could be that for two different parabolas the true minima satisfy $\mu_1<\mu_2$ while the estimated minima satisfy $\mathbb{E}\hat{\mu_1}>\mathbb{E}\hat{\mu_2}$.

\subsection{proof of main theorem}
Let us first make the simple but important observation that a minimizer in Equation \ref{main_objective} actually exists (since the constraint set is non-compact this is not completely immediate). One way to see this is to note that the problem is of the form $\min_{x\in D}|x|^2$ for some closed set $D$, and problems of this form always have a minimizer (even if $D$ is non-compact). Combined with the observation that the objective function is strictly convex, we conclude that the problem in Equation \ref{main_objective} has \textit{exactly one } minimizer.

The basic strategy of the proof is now to derive certain properties of the minimizer, and add these properties as further constraints until we get something tractable.

\begin{lemma}
The minimizer $L_{opt}$ of the bias-constrained problem (Equation \ref{main_objective}) takes the form $QX^T$ for some $d\times d$ matrix $Q$
\end{lemma}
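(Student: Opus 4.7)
The plan is to use an orthogonal decomposition of $L$ based on the column space of $X$ and show that any component outside this subspace can only hurt the objective while leaving the constraint untouched. Concretely, since $G=X^TX$ is invertible, the matrix $P := X G^{-1} X^T \in \mathbb{R}^{N\times N}$ is the orthogonal projection onto the column span of $X$. For any candidate $L \in \mathbb{R}^{d\times N}$, decompose $L = L_1 + L_2$ where $L_1 := LP$ and $L_2 := L(I_N - P)$; the rows of $L_1$ lie in the column span of $X$ and the rows of $L_2$ lie in its orthogonal complement.

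Next I would verify the two crucial properties of this decomposition. First, because $PX = X$, we have $L_1 X = LPX = LX$ and $L_2 X = L(I-P)X = 0$, so the bias matrix $LX - I_d$ depends only on $L_1$, and hence so does the constraint $\|LX - I_d\|_p \leq C$. Second, the cross term vanishes:
\begin{equation}
L_1 L_2^T = LP (I-P) L^T = 0,
\end{equation}
since $P$ is an idempotent symmetric projection. Consequently
\begin{equation}
\Tr(LL^T) = \Tr(L_1 L_1^T) + \Tr(L_2 L_2^T) \geq \Tr(L_1 L_1^T),
\end{equation}
with equality iff $L_2 = 0$.

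Combining these two observations, if $L$ is feasible then so is $L_1 = LP$, with a strictly smaller objective whenever $L_2 \neq 0$. By uniqueness of the minimizer (noted just above the lemma), the optimal $L_{\mathrm{opt}}$ must satisfy $L_{\mathrm{opt}} = L_{\mathrm{opt}} P = L_{\mathrm{opt}} X G^{-1} X^T$. Setting $Q := L_{\mathrm{opt}} X G^{-1} \in \mathbb{R}^{d\times d}$ yields $L_{\mathrm{opt}} = Q X^T$ as claimed.

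I do not anticipate a serious obstacle here: the argument is essentially a ``project onto the feasible-affecting subspace'' move, and the only things to be careful about are that $G^{-1}$ exists (assumed in the theorem) and that the cross term in the trace genuinely vanishes (which follows from $P^2 = P = P^T$). The same kind of reduction will presumably be used again in the rest of the theorem's proof to further constrain $Q$.
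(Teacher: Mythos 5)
Your proof is correct and takes essentially the same route as the paper: the paper writes $L = QX^T + Q_{\perp}X_{\perp}^T$ using an explicit orthonormal basis $X_{\perp}$ of the orthogonal complement of the column space of $X$, which is exactly your decomposition $L = LP + L(I_N-P)$ with $P = XG^{-1}X^T$, and both arguments note that the bias constraint ignores the orthogonal component while the trace objective strictly penalizes it. The only (immaterial) difference is that you phrase it with the projection matrix rather than an explicit basis, and you invoke uniqueness where plain minimality already suffices.
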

\begin{proof}
Let $X_{\perp}$ be any $N\times (N-d)$ matrix whose columns for a basis for the orthogonal complement of $Colspace(X)$. We may assume that the columns are orthonormal; $X_{\perp}^TX_{\perp}=I_{N-d}$. Note that the concatenated matrix $[X,X_{\perp}]$ is invertible. Therefore, we can express the optimum as $L=M\left(\begin{array}{c}X^T\\X_{\perp}^T\end{array}\right)$ for some $d\times N$ matrix $M$. Writing $M$ in block form $M=[Q,Q_{\perp}]$ where $Q\in\mathbb{R}^{d\times d}$ and $Q_{\perp}\in\mathbb{R}^{d\times (N-d)}$, we have $L=QX^T+Q_{\perp}X_{\perp}^T$. Note that $X^TX_{\perp}=0$; therefore 
\begin{eqnarray}
\|LL^T\|_F^2&=&\|QX^T\|_F^2+\|Q_{\perp}X_{\perp}^T\|_F^2\\
&= & \|QX^T\|_F^2+Tr(Q_{\perp}X_{\perp}^TX_{\perp}Q_{\perp}^T)\\
& = & \|QX^T\|_F^2+Tr(Q_{\perp}Q_{\perp}^T)\\
& = & \|QX^T\|_F^2+\|Q_{\perp}\|_F^2
\end{eqnarray} where $\|\cdot\|_F$ is the Frobenius norm. Similarly, the bias $LX-I=QX^TX-I$ does not depend on $Q_{\perp}$. Thus, any non-zero value of $Q_{\perp}$ will strictly increase the value of the objective relative to setting $Q_{\perp}=0$, without having any effect on the constraint.
\end{proof}

\begin{corollary}
The optimal solution to Equation \ref{main_objective} takes the form $L=(I+Q)G^{-1}X^T$ where 
\begin{equation}
\label{Q_objective}
Q=argmin_{Q'\in\mathbb{R}^{d\times d}}Tr(Q'G^{-1}Q'^T)/2+Tr(Q'G^{-1})+\alpha^{-1}\|Q'\|_p
\end{equation}
and $\alpha\geq 0$ is determined by $C$.
\end{corollary}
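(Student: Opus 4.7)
The plan is to leverage the preceding lemma (which reduces $L$ to $L=QX^T$ for some $d\times d$ matrix $Q$) and then perform a change of variables that makes the bias operator appear explicitly, after which the claim reduces to a standard Lagrangian duality argument.

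First, starting from $L=QX^T$, I would substitute the bijective reparametrization $Q=(I+Q')G^{-1}$, which is well-defined since $G$ is assumed invertible. Under this substitution, $LX = (I+Q')G^{-1}X^TX = I+Q'$, so the bias operator $LX-I$ equals $Q'$ exactly, and the Schatten constraint $\|LX-I\|_p\leq C$ becomes the clean constraint $\|Q'\|_p\leq C$. Next I would compute the variance term:
\begin{equation}
LL^T = (I+Q')G^{-1}X^TX G^{-1}(I+Q')^T = (I+Q')G^{-1}(I+Q')^T,
\end{equation}
and expand using linearity and cyclicity of the trace (and symmetry of $G^{-1}$) to obtain
\begin{equation}
\Tr(LL^T) = \Tr(G^{-1}) + 2\Tr(Q'G^{-1}) + \Tr(Q'G^{-1}Q'^T).
\end{equation}
Dividing by $2$ and discarding the constant $\Tr(G^{-1})/2$, which does not affect the minimizer, gives the objective $\Tr(Q'G^{-1}Q'^T)/2+\Tr(Q'G^{-1})$ subject to $\|Q'\|_p\leq C$.

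The last step is to convert the constrained formulation into the penalized one stated in the corollary. I would invoke Lagrangian duality: the objective is strictly convex in $Q'$ (the quadratic term uses the positive-definite matrix $G^{-1}$), the constraint set $\{Q':\|Q'\|_p\leq C\}$ is convex, and Slater's condition holds trivially since $Q'=0$ is strictly feasible whenever $C>0$ (the case $C=0$ reduces to the classical Gauss-Markov theorem and can be handled separately). Strong duality therefore applies, and the unique minimizer of the constrained problem coincides with the unique minimizer of $\Tr(Q'G^{-1}Q'^T)/2+\Tr(Q'G^{-1})+\mu\|Q'\|_p$ for a suitable multiplier $\mu\geq 0$. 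Setting $\mu=\alpha^{-1}$ yields the advertised form.

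The main obstacle, modest as it is, is the duality step rather than the algebra: one must verify that the correspondence $C\leftrightarrow\alpha$ is well-defined and monotone (so that every $C\geq 0$ corresponds to some $\alpha\geq 0$ and vice versa). This follows from standard convex-analytic arguments --- the norm of the constrained minimizer as a function of the multiplier is nonincreasing, continuous on $(0,\infty)$, and ranges from $0$ (as $\alpha\to 0$) up to the OLS bias norm (as $\alpha\to\infty$) --- but it is worth flagging explicitly so that the reparametrization $\mu=\alpha^{-1}$ does not appear arbitrary. The remainder is direct computation, and the special cases $p\in\{1,2,\infty\}$ asserted in Theorem~\ref{main_theorem} will then be recovered in the proof of that theorem by solving the resulting penalized problem in the basis that simultaneously diagonalizes $G$ and the optimal $\hat G$.
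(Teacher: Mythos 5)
Your proposal is correct and takes essentially the same route as the paper: invoke the preceding lemma, substitute $Q=(I+Q')G^{-1}$ so that the bias operator becomes exactly $Q'$, expand $\Tr(LL^T)$ and drop the constant $\Tr(G^{-1})/2$, then convert the constraint $\|Q'\|_p\leq C$ into the penalty $\alpha^{-1}\|Q'\|_p$ via Lagrangian duality --- the paper compresses all of this into ``plug in and simplify; the $\alpha$ term is a Lagrange multiplier,'' so your version simply supplies the details. One minor slip in your side remark on the $C\leftrightarrow\alpha$ correspondence: as $\alpha\to\infty$ the penalized minimizer tends to $Q'=-I$, i.e.\ $L=0$, so the bias norm increases to $d^{1/p}$ (the norm of the zero estimator's bias), not to the OLS bias norm, which is zero; this does not affect the argument.
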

\begin{proof}
By the Lemma, it is no loss of generality to assume that $L$ has the indicated form. Now plug in to Equation \ref{main_objective} and simplify. The $\alpha$ term is just converting the constraint to a Lagrange multiplier.
\end{proof}

By the discussion above, we conclude that there is \textit{exactly one} minimum of Equation \ref{Q_objective}. 

\begin{proposition}
\label{prop_commutes}
If $Q$ is the solution to the equation \ref{Q_objective}, then $Q$ is symmetric and commutes with $G$
\end{proposition}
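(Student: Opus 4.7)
The plan is to exploit strict convexity (hence uniqueness) of the minimizer $Q$, combined with two invariance properties of the objective $f(Q') := \tfrac{1}{2}\Tr(Q'G^{-1}Q'^T) + \Tr(Q'G^{-1}) + \alpha^{-1}\|Q'\|_p$. As noted in the excerpt, the quadratic part is strictly convex ($G^{-1}$ is positive definite), and the remaining terms are convex, so there is a unique global minimizer. The strategy is to show that two symmetry operations leave $f$ invariant and therefore fix $Q$: conjugation by any orthogonal $R$ commuting with $G$ (yielding commutativity with $G$), and then transposition (yielding symmetry).

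For commutativity with $G$, I first argue that for any orthogonal $R$ with $RG=GR$, one has $f(RQ'R^T)=f(Q')$ for every $Q'$. Indeed, $R^TG^{-1}R = G^{-1}$ (using $G^{-1}R = RG^{-1}$ and $R^TR=I$), so the quadratic term becomes $\Tr(RQ'G^{-1}Q'^TR^T) = \Tr(Q'G^{-1}Q'^T)$ by cyclicity of trace; the linear term is analogous; and Schatten norms are orthogonally invariant. Applying this at $Q'=Q$ together with uniqueness gives $RQR^T=Q$, i.e., $QR=RQ$, for every such $R$. To deduce $QG=GQ$ from this, I will work in the eigenbasis of $G$: the orthogonal matrices commuting with $G$ are exactly the block-diagonal orthogonal matrices whose blocks match the eigenspaces of $G$. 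A short Schur-type argument, taking blockwise $\pm I$ in one eigenspace at a time to annihilate the off-diagonal blocks of $Q$, and then invoking the fact that the only matrix commuting with every element of $O(k)$ is a scalar multiple of $I_k$, forces $Q$ to be block-diagonal with scalar blocks. This is equivalent to $[Q,G]=0$.

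Given $QG=GQ$, the symmetry step reduces to showing $f(Q^T)=f(Q)$ and invoking uniqueness once more. The quadratic term satisfies $\Tr(Q^TG^{-1}Q) = \Tr(Q^TQG^{-1}) = \Tr(QG^{-1}Q^T)$, using $G^{-1}Q=QG^{-1}$ (from commutativity) and cyclicity of trace. The linear term is transpose-invariant unconditionally because $\Tr(A)=\Tr(A^T)$ and $G$ is symmetric. The Schatten norm satisfies $\|Q^T\|_p=\|Q\|_p$ since transposition preserves singular values. Thus $Q^T$ is also a minimizer, and uniqueness forces $Q^T=Q$.

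The main obstacle will be the deduction of $QG=GQ$ from $Q=RQR^T$ for all orthogonal $R$ commuting with $G$: this is straightforward when $G$ has distinct eigenvalues (only $\pm I$ sign flips are needed) but requires the Schur-lemma irreducibility argument when eigenvalues have multiplicity. Once commutativity is secured, the symmetry step is essentially a brief trace calculation.
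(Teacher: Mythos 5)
Your argument is correct, but it reaches the conclusion by a genuinely different route than the paper. The paper passes to the eigenbasis of $G$ (reparametrizing $P=U^TQ'U$), splits $P$ into diagonal and off-diagonal parts, checks that the cross terms vanish and the purely off-diagonal quadratic contribution is nonnegative, and invokes a Ky Fan pinching inequality $\|P_d\|_p\leq\|P\|_p$ (its Lemma \ref{lemma_diag}); uniqueness of the minimizer then forces $P=P_d$, i.e.\ $Q$ diagonal in the eigenbasis. You instead run a symmetrization/fixed-point argument: the objective in \ref{Q_objective} is invariant under $Q'\mapsto RQ'R^T$ for every orthogonal $R$ commuting with $G$ (since $R^TG^{-1}R=G^{-1}$ and Schatten norms are orthogonally invariant), so the unique minimizer satisfies $RQR^T=Q$ for all such $R$, and your commutant (Schur-type) argument then shows $Q$ is a scalar multiple of the identity on each eigenspace of $G$; a transpose-invariance argument finishes symmetry. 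Both proofs lean on uniqueness via strict convexity, but your key tool is orthogonal invariance plus the structure of the commutant of the block-orthogonal group, whereas the paper's is the pinching inequality. Your route avoids that matrix-norm inequality altogether and actually yields a slightly stronger conclusion — scalar blocks on each eigenspace — which already implies symmetry, so your separate transpose step is redundant (and one wording slip: scalar blocks are not \emph{equivalent} to $[Q,G]=0$ when $G$ has repeated eigenvalues, only strictly stronger; the implication you need does hold). The paper's more computational route has the side benefit of producing the diagonal reduction (Equation \ref{q_objective_diag}) used in the remainder of the proof of Theorem \ref{main_theorem}, but your argument supplies the same reduction once you note that $Q$ is diagonal in the eigenbasis of $G$.
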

Before giving the proof, we first present a more technical matrix lemma. 
\begin{lemma}
\label{lemma_diag}
For any square matrix $M$ and $p\geq 1$, $\|M_d\|_p\leq \|M\|_p$, where $M_d$ is the diagonal part (i.e., the matrix with all non-diagonal entries set to zero).
\end{lemma}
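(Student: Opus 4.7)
The plan is to exploit unitary invariance of the Schatten $p$-norms together with a short averaging identity that expresses $M_d$ as a convex combination of orthogonal conjugates of $M$, after which the triangle inequality does all the work. The only fact about $\|\cdot\|_p$ that I will use is that it depends on its argument only through its singular values, and is hence invariant under conjugation by orthogonal matrices: $\|UMV\|_p=\|M\|_p$ for any orthogonal $U,V$.

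The key step is the following averaging trick. For each sign vector $\epsilon\in\{-1,+1\}^n$, let $D_\epsilon:=\mathrm{diag}(\epsilon_1,\dots,\epsilon_n)$, which is orthogonal and self-inverse. A direct entrywise computation gives $(D_\epsilon M D_\epsilon)_{ij}=\epsilon_i\epsilon_j M_{ij}$. Averaging uniformly over all $2^n$ sign vectors and using the elementary identity $2^{-n}\sum_\epsilon \epsilon_i\epsilon_j=\delta_{ij}$ yields
\[ M_d=\frac{1}{2^n}\sum_{\epsilon\in\{-1,+1\}^n} D_\epsilon M D_\epsilon. \]
The desired bound then follows in one line from the triangle inequality and unitary invariance:
\[ \|M_d\|_p \leq \frac{1}{2^n}\sum_{\epsilon}\|D_\epsilon M D_\epsilon\|_p = \frac{1}{2^n}\sum_\epsilon\|M\|_p = \|M\|_p. \]

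There is essentially nothing to grind through once the averaging identity is in hand; the only real insight needed is to recognize the diagonal part of $M$ as a convex combination of orthogonal conjugates of $M$ itself. This is a standard device (the diagonal case of the pinching inequality for unitarily invariant norms), so the ``hard part'' of the proof reduces to spotting this representation and then writing it down cleanly.
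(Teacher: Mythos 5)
Your proof is correct, and it takes a genuinely different route from the paper. The paper argues via singular values: it notes that the singular values of $M_d$ are the absolute values of the diagonal entries of $M$, invokes Ky Fan's inequality to get the weak majorization $\sum_{i\leq k}\sigma_i(M_d)\leq\sum_{i\leq k}\sigma_i(M)$ for all $k$, and then concludes by Schur convexity (together with monotonicity) of the Euclidean $p$-norm. You instead use the pinching identity
\begin{equation}
M_d=\frac{1}{2^n}\sum_{\epsilon\in\{-1,+1\}^n} D_\epsilon M D_\epsilon,
\end{equation}
which exhibits $M_d$ as a convex combination of orthogonal conjugates of $M$, so the bound follows from the triangle inequality and unitary invariance alone. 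Your argument is more self-contained: it needs no majorization machinery, it works verbatim for every unitarily invariant norm (not just Schatten norms), and it extends immediately to general block-diagonal pinchings. The paper's route, on the other hand, passes through the stronger majorization statement on singular values, which is a sharper piece of information than the norm inequality itself and could be reused elsewhere. Either proof fully establishes the lemma as stated.
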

\begin{proof}
Evidently the singular values of $M_d$ coincide with the absolute values of the diagonal entries. By an inequality of Ky Fan \citep{kyfan}, 
\begin{equation}\sum_{i\leq k} \sigma_i(M_d)\leq \sum_{i\leq k} \sigma_i(M)\end{equation}
for any $1\leq k\leq d$, where $\sigma_i$ denotes the singular values, ordered from largest to smallest. The lemma now follows from Schur convexity of the Euclidean p-norm.
\end{proof}

\begin{proof} (of proposition \ref{prop_commutes})
 Let $G^{-1}=UDU^T$ be the diagonalization, where $U$ is an orthogonal matrix and $D$ is diagonal. The objective is 
 \begin{eqnarray}
Q&=&argmin_QTr(Q'UDU^TQ'^T)/2+Tr(Q'UDU^T)+\alpha^{-1}\|Q'\|_p\\
& = &argmin_QTr(U^TQ'UDU^TQ'^TU)/2+Tr(U^TQ'UD)+\alpha^{-1}\|U^TQ'U\|_p\\
UQU^T& = &argmin_PTr(PDP^T)/2+Tr(PD)+\alpha^{-1}\|P\|_p\\
 \end{eqnarray}
 where we reparametrized as $P:=U^TQ'U$. Evidently, the proposition will follow if we can show that the minimal $P$ is diagonal. Let $P=P_d+P_{od}$ be the decomposition into diagonal and off-diagonal parts\footnote{By definition an \textit{off-diagonal matrix} is one with all zeros along the diagonal.}. Plugging into the objective
 \begin{eqnarray}
Obj(P) &=&Tr((P_d+P_{od})D(P_d+P_{od}^T))/2+Tr(P_dD)+Tr(P_{od}D)+\alpha^{-1}\|P\|_p\\
 &=& Tr(P_dDP_d)/2+Tr(P_dD)+Tr(P_{od}DP_{od}^T)/2+Tr(P_{od}D)+\alpha^{-1}\|P\|_p\\
 &+&Tr(P_dDP_{od}^T)
 \end{eqnarray}
Now, we note that in general if $A$ is diagonal, and $B$ is off-diagonal,then $Tr(AB)=0$. So the expression simplifies to 
\begin{equation}Tr(P_dDP_d)/2+Tr(P_dD)+Tr(P_{od}DP_{od}^T)/2+\alpha^{-1}\|P\|_p\end{equation}
Now, the third term is non-negative because it is the trace of a PSD matrix; thus $Obj(P)\geq Tr(P_dDP_d)/2+Tr(P_dD)+\alpha^{-1}\|P\|_p$. By Lemma \ref{lemma_diag}, $\|P\|_p\geq \|P_d\|_p$, and therefore 
\begin{equation}Obj(P)\geq Obj(P_d)\end{equation}
However, $P$ was assumed to be the minimum, which implies by strict convexity that $P=P_d$
\end{proof}

By the above, we now know that the optimal $Q$ must satisfy $Q=U diag(q) U^T$, where $q$ denotes the vector of eigenvalues and $U$ is the matrix of eigenvectors of $G^{-1}$. Since $\hat{G}^{-1}=(1+Q)G^{-1}$ in the notation of the theorem statement, we have shown the first two claims, namely that $\hat{G}$ is symmetric and commutes with $G$. 

By plugging into the objective in \ref{Q_objective} and simplifying, we see that 

\begin{equation}
\label{q_objective_diag}
q=argmin_{x\in\mathbb{R}^d} {\frac 1 2}\sum_i x_i^2/\sigma_i^2+\sum_i x_i/\sigma_i^2 +\alpha^{-1}|x|_p
\end{equation}
where $\sigma_i^2$ are the eigenvalues of $G$, and $|\cdot|_p$ denotes the Euclidean p norm of a vector. 

To show the last claim that $\hat{G}-G$ is non-negative definite, it is enough to show that $0\geq q_i\geq -1$,since $\lambda_i(\hat{G})=\lambda_i(G)/(1+q_i)$.It is easy to see that the eigenvectors of $Q$ must be non-positive (since if any eigenvector is positive, then switching the sign will leave the first and third terms alone, while strictly decreasing the second term). To see the second inequality, we first add a suitable constant to the objective and rewrite it as ${\frac 1 2}\sum_i (x_i+1)^2/\sigma_i^2+\alpha^{-1}|x|_p$. Supposing that $q_i<-1$ for some $i$, let us replace it with $q_i'=-2-q_i$. Doing so does not change the first term (i.e. $(q_i+1)^2=(q_i'+1)^2$), however it strictly decreases the second term, since $|q_i|'<|q_i|$ \footnote{e.g., square both sides}, contradicting the minimality of $q$\footnote{This argument doesn't quite go through if $p=\infty$, since in that case decreasing the magnitude of a component of $q$ might not decrease the norm. But that's fine because we will derive the exact solution for $p=\infty$ shortly.}

Until now, we have not made any assumption about $p$ except that $p\geq 1$. At this point, we separately analyze each of the three special cases $p=1,2,\infty$. 

Before doing so, however, we first present the following well-known and elementary calculation, which we will employ several times in what follows:

\begin{proposition}
\label{prop_abs_min}
Let $y,\tau>0$ then 
\begin{equation}argmin_{x\in\mathbb{R}}{\frac 1 2}(x+y)^2+\tau |x|=min(\tau-y,0)\end{equation}
\end{proposition}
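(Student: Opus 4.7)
The plan is to treat this as a standard one-dimensional soft-thresholding computation, split according to the sign of $x$ to handle the non-differentiability of $|x|$, and combine the two pieces. Let $f(x) := \tfrac{1}{2}(x+y)^2 + \tau|x|$. Note $f$ is strictly convex, so a unique minimizer exists; I just have to identify it.

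First I would analyze the non-negative branch. For $x \geq 0$, $f(x) = \tfrac{1}{2}(x+y)^2 + \tau x$, whose derivative $x + y + \tau$ is strictly positive since $y, \tau > 0$. Hence $f$ is strictly increasing on $[0,\infty)$ and is minimized there at $x = 0$, with value $\tfrac{1}{2}y^2$.

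Next I would analyze the non-positive branch. For $x \leq 0$, $f(x) = \tfrac{1}{2}(x+y)^2 - \tau x$ with derivative $x + y - \tau$, vanishing at the candidate point $x^\star = \tau - y$. I split on whether $x^\star$ falls inside the branch:
\begin{itemize}
\item If $\tau < y$, then $x^\star = \tau - y < 0$ lies in the interior of $(-\infty, 0]$; by strict convexity of the restriction, $x^\star$ is the unique minimizer on this branch. A direct substitution gives $f(x^\star) = \tau y - \tfrac{1}{2}\tau^2$, and $f(0) - f(x^\star) = \tfrac{1}{2}(y-\tau)^2 > 0$, so $x^\star$ beats the right-branch minimizer. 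The global minimizer is $\tau - y = \min(\tau-y,0)$.
\item If $\tau \geq y$, then $x + y - \tau \leq x \leq 0$ for every $x \leq 0$, so the derivative on $(-\infty, 0]$ is non-positive and $f$ is non-increasing there. The minimum on this branch is thus attained at $x = 0$, agreeing with the right-branch minimizer. The global minimizer is $0 = \min(\tau-y,0)$.
\end{itemize}

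In both cases the minimizer coincides with $\min(\tau-y, 0)$, which proves the claim. There is no real obstacle here: the only subtlety is the non-smooth point at $x = 0$, which is resolved by the branchwise derivative analysis above, and by checking the sign of $\tau - y$ to determine whether the unconstrained stationary point of the left branch lies inside its domain.
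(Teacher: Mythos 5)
Your proof is correct and takes essentially the same route as the paper's: both arguments reduce to a branchwise analysis of the piecewise parabola (the paper dismisses $x>0$ in one line and completes the square on $x\le 0$; you track derivative signs on each branch) followed by a case split on the sign of $\tau-y$. The substitution of derivative-sign reasoning for completing the square is a cosmetic difference only.
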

\begin{proof}
It is clear that the optimal $x$ cannot be positive, so we want to compute 
\begin{equation}argmin_{x\leq 0}{\frac 1 2}(x+y)^2-\tau x=argmin_{x\leq 0}{\frac 1 2}(x+y-\tau)^2+{\frac {y^2-(y-\tau)^2}2}\end{equation}
where we completed the square. The second term does not depend on $x$ and therefore has no effect on the argmin. Now the formula is immediate. If $\tau-y<0$, then the minimum is plainly attained at $x=\tau-y<0$. If $\tau-y>0$, then the parabola is monotonically decreasing on the interval $(-\infty,0)$, so the minimum is attained at $x=0$. 

\end{proof}

\textbf{Nuclear case (p=1)}

The objective \ref{q_objective_diag} splits into a sum of separable one-dimensional problems: 
\begin{equation}argmin_{x\in\mathbb{R}} {\frac 1 2}x^2/\sigma_i^2+x/\sigma_i^2+\alpha^{-1}|x|\end{equation}
\begin{equation}argmin_{x\in\mathbb{R}} {\frac 1 {2}}(x+1)^2+\sigma_i^2\alpha^{-1}|x|\end{equation}
By Proposition\ref{prop_abs_min},
\begin{equation}q_i=min(\sigma_i^2/\alpha-1,0)\end{equation}
The eigenvalues of $\hat{G}$ are thus given by 
\begin{equation}\lambda_i(\hat{G})={\frac {\sigma_i^2}{1+q_i}}={\frac {\sigma_i^2}{min(\sigma_i^2/\alpha,1)}}=max(\sigma_i^2,\alpha)\end{equation}

as claimed.

\textbf{Frobenius case (p=2)}

This is a simple calculus exercise and omitted. 

\textbf{Spectral case (p=$\infty$)}

W know that there exists exactly one minimizer $q_{opt}\in\mathbb{R}^d$ of \ref{q_objective_diag}. Fix some $C>max(|q_{opt}|_{\infty},max_i(1+\sigma_i^2/\alpha))$ and consider the problem
\begin{equation}
\label{l_infinity_obj}
\min_{q: |q|_{\infty}\leq C} {\frac 1 2}\sum_i q_i^2/\sigma_i^2+\sum_i q_i/\sigma_i^2 +\alpha^{-1}|q|_{\infty}
\end{equation}
This clearly has the same minimum as the original unconstrained problem.

Note the following identity: 
\begin{equation}\alpha^{-1}|q|_{\infty}=max_{y: |y|_1\leq \alpha^{-1}}\langle q,y\rangle\end{equation}

By Von Neumann's minimax theorem, we can interchange the min and the max. So the optimal value of the objective is:

\begin{equation}
\label{eqn_minmax}
max_{y: |y|_1\leq \alpha^{-1}}min_{q:|q|_{\infty}\leq C} {\frac 1 2}\sum_i q_i^2/\sigma_i^2+\sum_i q_i/\sigma_i^2 +\langle q,y\rangle 
\end{equation}

Now, the set $\{q:|q|_{\infty}\leq C\}$ is geometrically a product of intervals $[-C,C]^d$. Thus, we see that the inner objective splits into a sum of uncoupled 1-dimensional problems:
\begin{equation}q_i=argmin_{q_i\in [-C,C]} {\frac 1 2}q_i^2/\sigma_i^2+q_i/\sigma_i^2+q_iy_i=argmin_{q_i\in[-C,C]}{\frac 1 {2\sigma_i^2}}(q_i+1+\sigma_i^2y_i)^2-{\frac 1 {2\sigma_i^2}}(1+\sigma_i^2y_i)^2\end{equation}

By assumption on $C$,  $1+\sigma_i^2y_i\leq C$, and therefore for any fixed $y_i$, the minimum of the inner problem is attained at 
\begin{equation}
\label{eqn_xy}
q_i=-1-\sigma_i^2y_i
\end{equation}, with minimal value equal to $-{\frac 1 {2\sigma_i^2}}(1+\sigma_i^2y_i)^2=-{\frac {\sigma_i^2} {2}}(\sigma_i^{-2}+y_i)^2$. Plugging back in to \ref{eqn_minmax}, we need to solve 
\begin{equation}min_{y:|y|_1\leq \alpha^{-1}}{\frac 1 2}\sum_i \sigma_i^2 (\sigma_i^{-2}+y_i)^2\end{equation}

For this, we introduce a Lagrange multiplier $\lambda$, upon which the objective splits again into a sum of uncoupled 1d problems:
$y_i=argmin_{y\in\mathbb{R}} {\frac 1 2}\sigma_i^2 (\sigma_i^{-2}+y)^2+\lambda |y|$.

Using Proposition \ref{prop_abs_min}, we derive the solution 
\begin{equation}y_i=\sigma_i^{-2}min(\lambda -1,0)=\sigma_i^{-2}(min(\lambda,1)-1)\end{equation}
where $\lambda$ is chosen to satisfy the original constraint $\sum_i |y_i|\leq \alpha^{-1}$. Using the relation \ref{eqn_xy} between $q_i$ and $y_i$
\begin{equation}q_i=-1-(min(\lambda,1)-1)=-min(\lambda,1)\end{equation}

for the solution to the problem \ref{l_infinity_obj}. Since we took $C$ to be large enough to contain the solution to the unconstrained problem, we conclude that this is also the solution to the unconstrained problem (i.e. $C=\infty$). Since $x_i$ are the eigenvalues of $Q$, we conclude that the eigenvalues of $\hat{G}$ are 
\begin{equation}\lambda_i(\hat{G})={\frac {\sigma_i^2}{1+q_i}}={\frac {\sigma_i^2}{1-min(\lambda,1)}}\end{equation}
Clearly the denominator lies in $[0,1]$, therefore we recover the claimed form in which all eigenvalues of $\hat{G}$ are obtained by scalar multiplication with some factor $>1$. Note that the case $\lambda>1$ corresponds to multiplication by infinity, i.e. setting $\hat{G}^{-1}$ (and thus $\hat{\beta})$ to zero.

\subsection{Proof of \ref{nuclear_gen_error}}
\label{nuclear_gen_error_proof}
We first give the definition of the Appel hypergeometric function $F_1$ for reference. 

The function is typically defined as
\begin{equation}F_1(\alpha,\beta,\beta',\gamma,x,y)	=	\sum_{m=0}^{\infty}\sum_{n=0}^{\infty}{\frac {(\alpha)_{m+n}(\beta)_{m}(\beta')_n}{m!n!(\gamma)_{m+n}}}x^my^n\end{equation}
Here $(\cdot)_m$ is the Pochammer symbol. The series is absolutely convergent for $|x|,|y|<1$, and arbitrary $\alpha,\beta,\beta',\gamma$. In \ref{nuclear_gen_error}
we may possibly need to evaluate it at some $x$ outside of this range; we do this by appropriate analytic continuation, discussed further below.  

To prove the formula, we note that the extremal cases follow straightforwardly from the integral formula. So in what follows we will assume that $\alpha\in[\lambda_-,\lambda_+]$. 

We rewrite the integral
\begin{eqnarray}
{\frac {Err}{\lambda}}-\beta^2 &=& \int_{\lambda_-}^{\lambda_+}\beta^2x^2f_{\alpha}(x)^{-2}-2\beta^2 xf_{\alpha}^{-1}(x)+\sigma^2 xf^{-2}_{\alpha}(x)d\mu\\
& = & \int_{\lambda_-}^{\alpha}\beta^2\alpha^{-2}x^2-2\beta^2\alpha^{-1}x+\sigma^2\alpha^{-2} xd\mu\\
&+&\int_{\alpha}^{\lambda_+}\beta^2-2\beta^2+\sigma^2 x^{-1}d\mu\\
& = & \beta^2\alpha^{-2} \int_{\lambda_-}^{\alpha} x^2d\mu +(\sigma^2\alpha^{-2}-2\beta^2\alpha^{-1})\int_{\lambda_-}^{\alpha} xd\mu\\
&-&\beta^2 (1-F_{\lambda}(\alpha))+ \sigma^2({\frac 1 {1-\lambda}}-\int_{\lambda_-}^\alpha x^{-1}d\mu)\\
& = & c +{\frac {\beta^2}{\alpha^{2}}}I(2,\alpha)+(\sigma^2\alpha^{-2}-2\beta^2\alpha^{-1})I(1,\alpha)+\beta^2F_{\lambda}(\alpha)-\sigma^2 I(-1,\alpha)
\end{eqnarray}
where we defined $I(r,\alpha)=\int_{\lambda_-}^{\alpha}x^rd\mu$. 

So we have reduced the theorem to just evaluating $I(r,\alpha)$ for $r\in \{-1,1,2\}$. Now, we plug in the form of the MP density, and make the variable substitution $u={\frac {x-\lambda_-}{\alpha-\lambda_-}}$. 
Clearly then $u(\alpha-\lambda_-)+\lambda_-=x$.

\begin{eqnarray*}
2\pi I(r,\alpha)& = & \int_{\lambda_-}^{\alpha} x^{r-1}\sqrt{(\lambda_+-x)(x-\lambda_-)}dx\\
&=& \int_0^1 (u(\alpha-\lambda_-)+\lambda_-)^{r-1}\sqrt{\lambda_+-\lambda_--u(\alpha-\lambda_-)}\sqrt{u(\alpha-\lambda_-)}(\alpha-\lambda_-)du\\
&=&(\alpha-\lambda_-)^{3/2}\lambda_-^{r-1}\sqrt{\lambda_+-\lambda_-}\int_0^1 \sqrt{u}(1+u{\frac {\alpha-\lambda_-}{\lambda_-}})^{r-1}\sqrt{1-u{\frac {\alpha-\lambda_-}{\lambda_+-\lambda_-}}}du
\end{eqnarray*}

We can now express this in the form given in the proposition by means of the following formula \citep{bailey}: 
\begin{equation}F_1(\alpha,\beta,\beta',\gamma,x,y)={\frac {\Gamma(\gamma)}{\Gamma(\alpha)\Gamma(\gamma-\alpha)}}\int_0^1u^{\alpha-1}(1-u)^{\gamma-\alpha-1}(1-ux)^{-\beta}(1-uy)^{-\beta'}du\end{equation}
if $\alpha,\gamma-\alpha>0$. We use this formula to define the analytic continuation in case $|x|>1$ as can happen in formula \ref{nuclear_gen_error}.
\qed

\subsection{Proof of Generalization Formulas \ref{prop_gaussian_mse} and \ref{prop_mse_diag}}
\label{mse_proof}
We first consider \ref{prop_mse_diag}. Given the setup in the main text, consider computing the test error for a fixed training/testing pair:
\begin{equation}|X_{test} (\hat{G}^{-1}X_{tr}^T)(X_{tr}\beta_0+\epsilon)-X_{test}\beta_0|^2=|X_{test} (\hat{G}^{-1}G -I)\beta_0+X_{test} \hat{G}^{-1}X_{tr}^T\epsilon|^2\end{equation}
\begin{equation}=|X_2DS B\beta_0+X_2D S\hat{G}^{-1}X_{tr}^T\epsilon|^2\end{equation}
where $S$ is the diagonal matrix containing the noise values $\sqrt{s_i}$, and $D$ is the diagonal matrix containing $\sqrt{\lambda_i}$. 
By rotational symmetry of the Frobenius norm, when we marginalize $X_2$ we get 
\begin{equation} |DSB\beta_0+DS\hat{G}^{-1}X_{tr}^T\epsilon|^2\end{equation}
And marginalizing over $\epsilon$ further yields 
\begin{equation}|DSB\beta_0|^2+\sigma^2 Tr (DS\hat{G}^{-1}G\hat{G}^{-1}SD)\end{equation}
where the cross term vanishes because $\mathbb{E}\epsilon=0$.
Remembering all matrices are actually diagonal, we obtain 
\begin{equation}n*err=\sum_{i=1}^d s_i\lambda_i(\beta_0)_i^2 (\lambda_i/f_{\alpha}(\lambda_i)-1)^2+\sigma^2\sum_i s_i\lambda_i (\lambda_i/f_{\alpha}(\lambda_i)^2)\end{equation}
Since $s_i$ are independent of $\lambda_i$ and have expectation 1, we can easily marginalize out, obtaining:
\begin{equation}\sum_{i=1}^d\lambda_i(\beta_0)_i^2 (\lambda_i/f_{\alpha}(\lambda_i)-1)^2+\sigma^2\sum_i \lambda_i (\lambda_i/f_{\alpha}(\lambda_i)^2)\end{equation}

At this point, the only variable that remains to marginalize over is $\lambda$. The result is:

\begin{equation}d^{-1}|\beta_0|^2 \sum_i  E_{\lambda_1,\dots, \lambda_n}\lambda_i\left( (\lambda_i/f_{\alpha}(\lambda_i)-1)\right)^2+\sigma^2 E_{\lambda_1,\dots, \lambda_n}\sum_i \lambda_i^2/f_{\alpha}(\lambda_i)^2 \end{equation}
where we used exchangeability of $\lambda_i$ to bring out the factor of $|\beta_0|^2$. Now the proposition follows by taking the limit $n\to\infty$ and applying the law of large numbers. 
\qed 

The proof of \ref{prop_gaussian_mse} is very similar - as above, the idea is to use rotational symmetry to reduce the squared-error to a sum of the form $E_{\lambda_1,\dots,\lambda_d}\sum_i g(\lambda_i)$. The main difference is that in this case the eigenvalues of $G$ are no longer independent, but they are at least still exchangeable, so we can bring out the factor of $|\beta_0|^2$ in front of the sum like above. And since $G$ now has a Wishart distribution, we can use the Marchenko-Pastur theorem \citep{bai} instead of the Law of large numbers to reduce the sum to the indicated integral form.
\subsection{Estimation of minima and curvatures}
\label{min_estimation}
To estimate the minimum and curvature of an error curve $Err(\alpha)$, we evaluate $Err(\alpha_i), i=1,\dots, n$, where where $n=500$ and $\alpha_i$ are equally logarithmically spaced between $10^{-3}$ and $10^5$. The minimum is simply estimated as $\min_i Err(\alpha_i)$. To estimate the curvature at the minimum, we took the closest few points to the minimum and fit a quadratic function. That is, if $i_0=argmin_i Err(\alpha_i)$ then we fit a two-parameter linear model of the form
\begin{equation}Err(\alpha_i)-Err(\alpha_{i_0})\sim a(\alpha_i-\alpha_{i_0})+b(\alpha_i-\alpha_{i_0})^2, i_0-5\leq i\leq i_0+5\end{equation}
with the estimated Hessian being $2b$.
\subsection{Relation between $\alpha$ and $C$ in Theorem \ref{main_theorem}}
\label{alpha_vs_C}
To express the relation between $\alpha$ and $C$, we consider two cases, corresponding to whether or not the constraint set contains the global minimizer of the objective $L\mapsto Tr(LL^T)$.
The first case is when $C\geq \|I_d\|_p=d^{1/p}$. In this case, the optimum is evidently $L=0_{d\times N}$, corresponding to $\alpha=\infty$. 

In the case where $C<d^{1/p}$, $0_{d\times N}$ does not lie in the constraint set. Therefore, the optimum $L$ lies on the boundary of the constraint set, i.e. $\|LX-I\|_p=C$. By plugging in the functional forms from Theorem \ref{main_theorem}, we obtain the following relations:
\begin{eqnarray}
\sum_{\sigma_i<\alpha} 1-{\frac {\sigma_i}{\alpha}} &=&C\hspace{1cm}(p=1)\\
\sqrt{\sum_i {\frac {\alpha^2}{(\sigma_i+\alpha)^2}}}&=&C\hspace{1cm} (p=2)\\
\alpha/(1+\alpha)&=&C\hspace{1cm} (p=\infty)\\
\end{eqnarray}
where $\sigma_i$ are the eigenvalues of $X^TX$.

\subsection{Effect of number of $\alpha$ values}
As pointed out in the main text, the performance of the cross-validated models can depend on the number $n$ of $\alpha$ values used for cross-validation. To do so, we follow the methodology used in \ref{gaussian_predictor}, with the only difference being we use a smaller hyperparameter range $\sigma\in [.5,2,3.5],\lambda=.5,\rho\in [0,.5,.9]$, and also vary the total number $n$ of $\alpha$ values used in the cross validation $n\in [9,15,20,30,50]$. We keep the limits of the range of $\alpha$ values as before, and also maintain the equal logarithmic spacing.  

We show the average error and win probability in Figure \ref{n_alpha_mse}
 and Figure \ref{n_alpha_winprob} respectively. As per the discussion in Sections \ref{opt_reg_form_section} and \ref{discussion_section}, we see that the Ridge often benefits drastically from increased number of $\alpha$s, and can overtake the Nuclear for large number of $\alpha$ in cases when the Nuclear performs better for small number of $\alpha$. 
 \begin{figure}[h]
\begin{center}
\includegraphics[width=14cm,height=12cm]{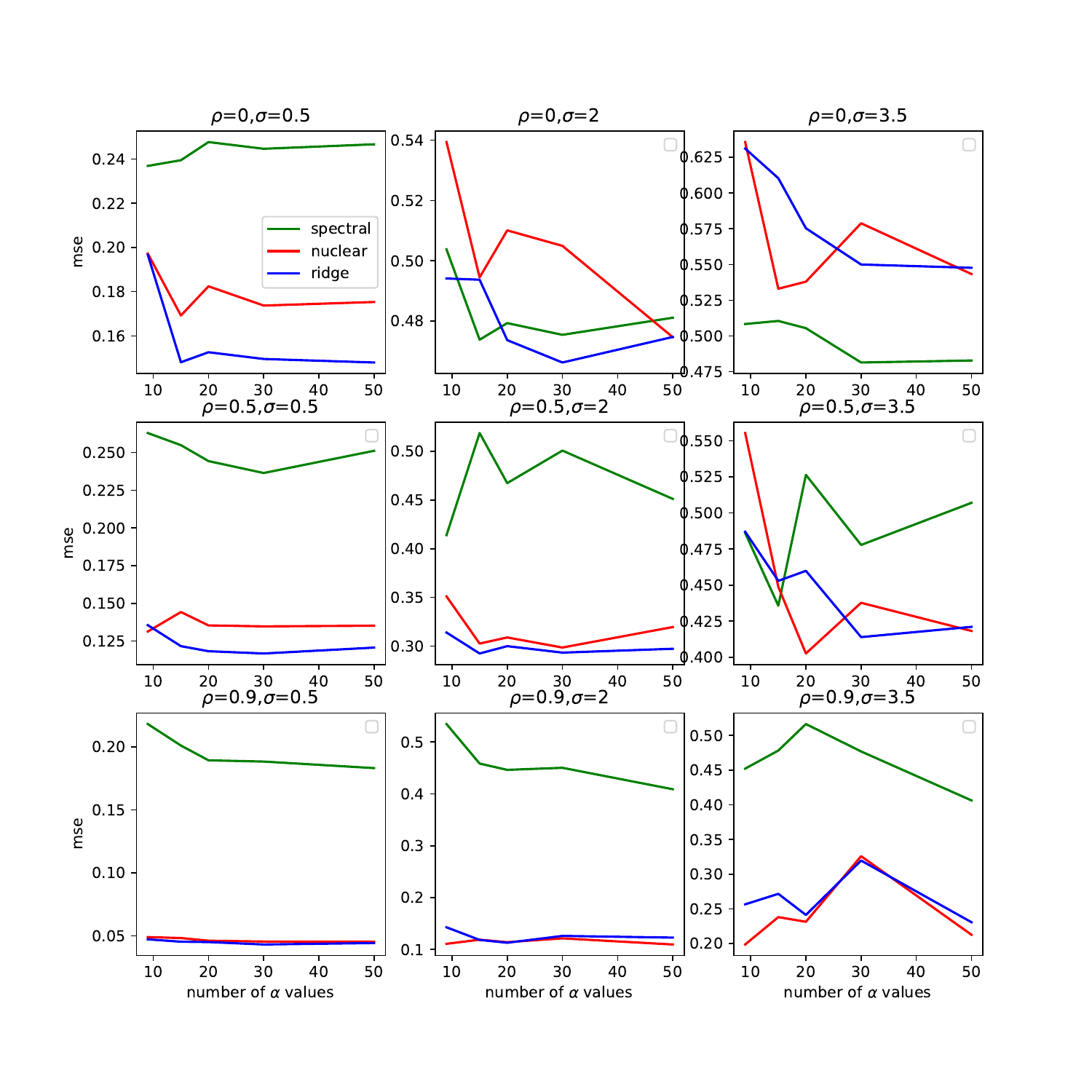}

\end{center}
\caption{Average test error for varied number of regularization strength values $\alpha$ used in cross-validation. Each point represents an aggregate over 100 datasets. }
\label{n_alpha_mse}
\end{figure}

\begin{figure}[h]
\begin{center}
\includegraphics[width=14cm,height=12cm]{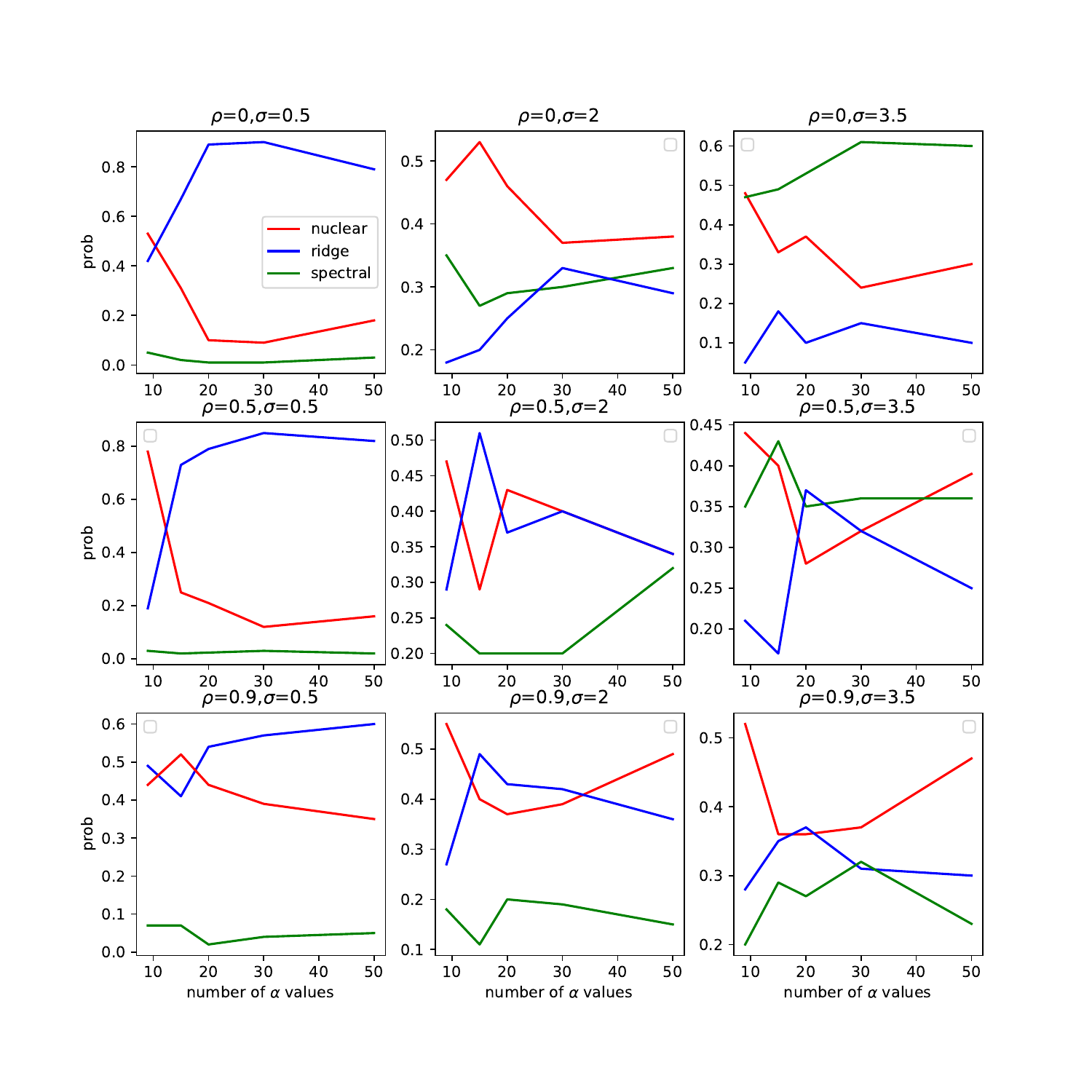}

\end{center}
\caption{Same as Figure \ref{n_alpha_mse}, except showing the probability that each model attains the lowest test error on a given dataset. Each point represents an aggregate over 100 datasets.}
\label{n_alpha_winprob}
\end{figure}

\subsection{Sparsely structured data and comparison to Lasso}
We consider a variant of the setup in Section \ref{gaussian_predictor}, in which the data is constructed to have sparse structure. In this case, when generating the ground truth coefficient vector $\beta_0$, we select a set of indices $I\subset \{1,\dots, 10\}$, $|I|=3$ at random, and generate $\beta_0$ as 
\begin{eqnarray}
(\beta_0)_i&=&N(0,1), i\in I\\
(\beta_0)_i&=&N(0,1)/10, i\not\in I;
\end{eqnarray}
We also use the smaller hyperparameter ranges
$\rho=0,\lambda=.5,\sigma\in [.5,1,1.5,2,2.5,3]$. Otherwise we follow the methodology of Section \ref{gaussian_predictor}.

We also include Lasso in the set of considered models, since it is designed to deal with sparse coefficient vectors. Note, however, that Lasso is not manifestly a Linear Estimator \footnote{We do not, however, have a proof that it cannot be expressed in the requisite form of Equation \ref{linear_estimator_defn}} in the sense defined in Section \ref{main_theorem_section} .

We show the average error and win probability in Figure \ref{sparse_mse}
 and Figure \ref{sparse_winprob} respectively. 

\begin{figure}[h]
\begin{center}
\includegraphics[width=14cm,height=8cm]{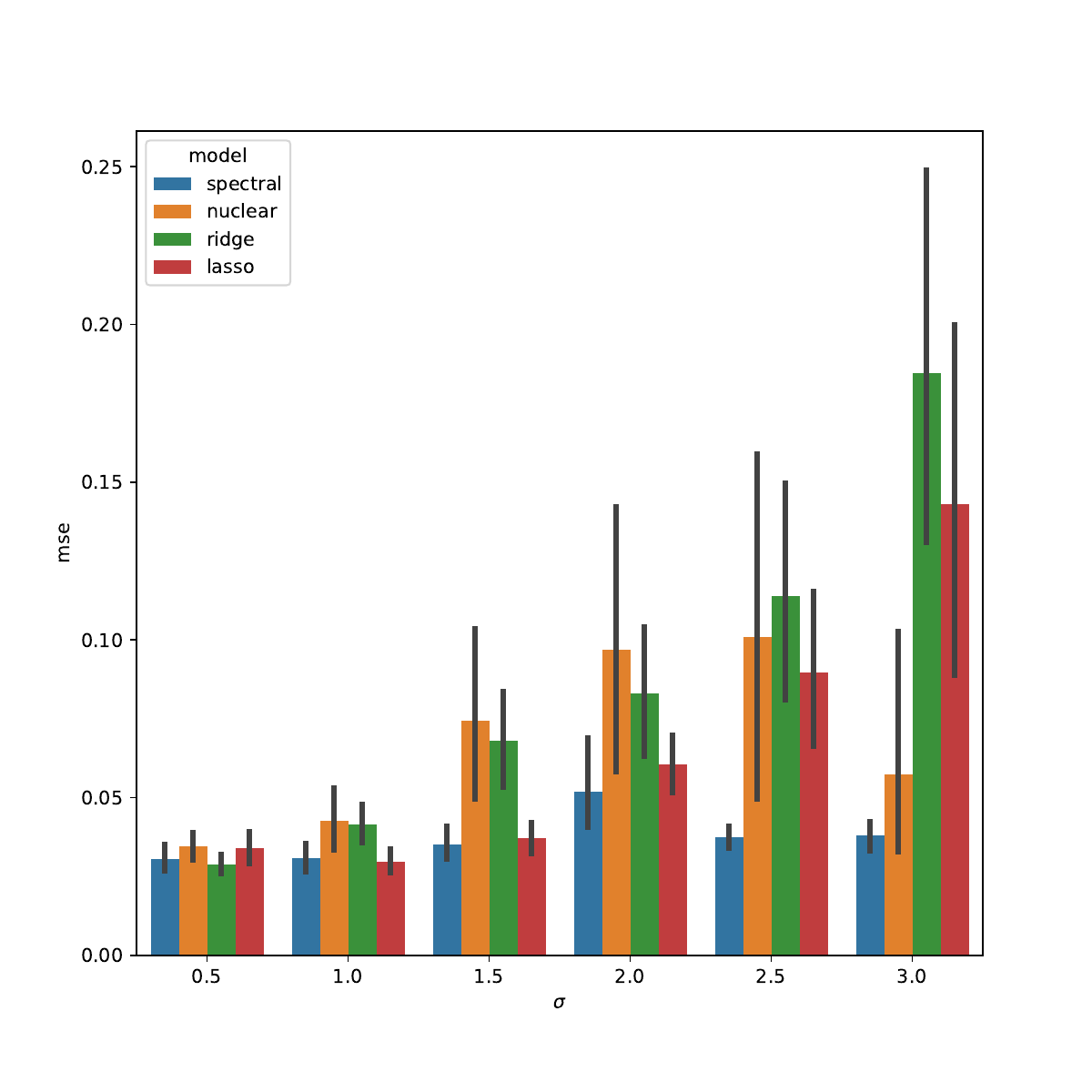}

\end{center}
\caption{Average test error, with 95 percent confidence intervals, on Gaussian data with sparse ground truth coefficient vector. Each bar is an aggregate over 100 datasets.}
\label{sparse_mse}
\end{figure}

\begin{figure}[h]
\begin{center}
\includegraphics[width=14cm,height=10cm]{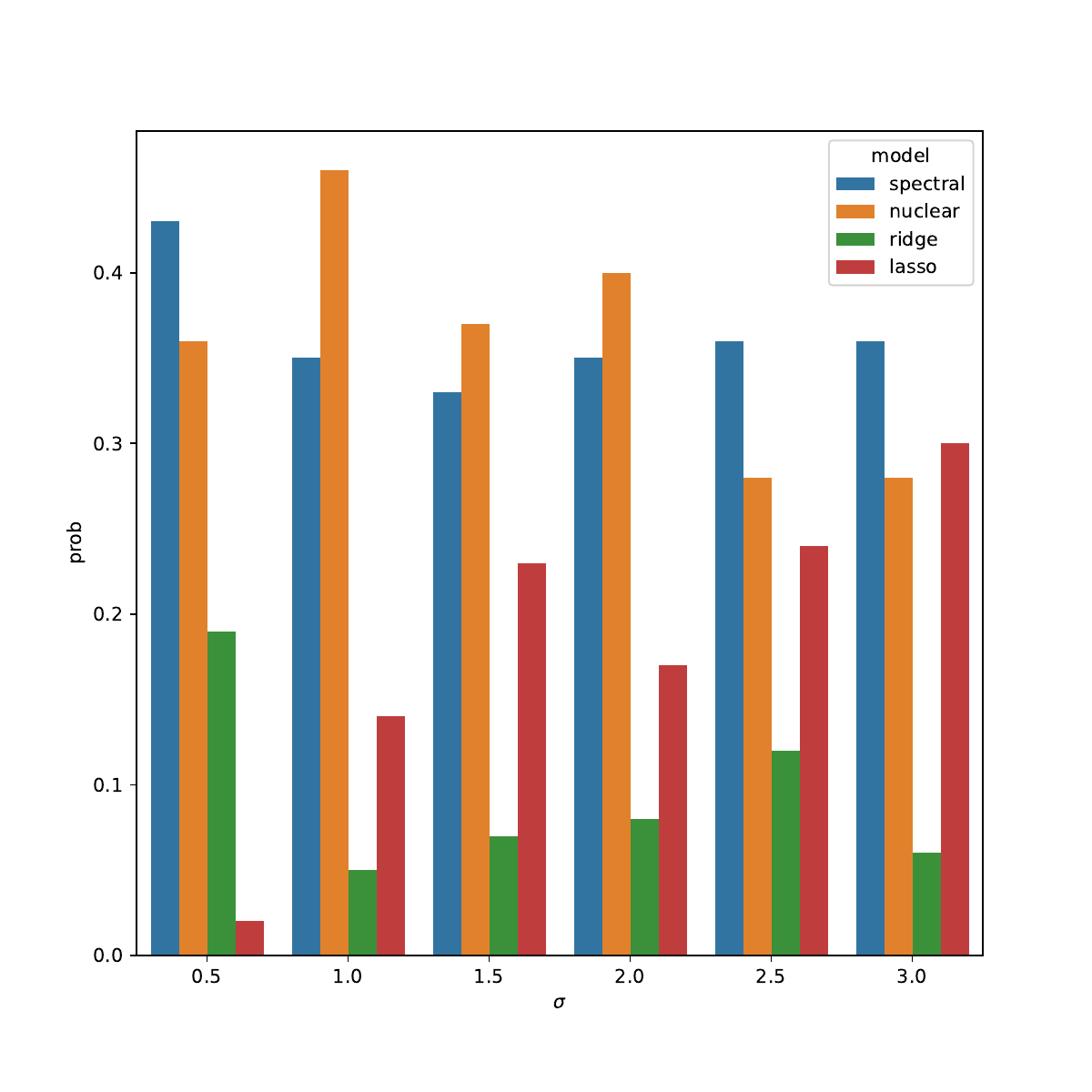}

\end{center}
\caption{Same as Figure \ref{sparse_mse}, except showing the probability that each model attains the lowest test error on a given dataset. Each bar is an agggregate over 100 datasets.}
\label{sparse_winprob}
\end{figure}

\subsection{Real data experiments}
We evaluate the models on real (i.e., non-synthetic) data. We consider the well-known Diabetes dataset (N=442, d=10) and California housing dataset (N=20640,d=8), both available from the \verb|sklearn.datasets| library. 

To analyze the performance of the models on each dataset, we create random train-test splits in which the size of the training set is always set to 300, and the test set comprises the remaining observations. Each model is fit on the training set (including the regularization strength $\alpha$, using the same cross-validation procedure as in Section \ref{gaussian_predictor}), and the mean-square-error is evaluated on the test set. We use the same set of 9 $\alpha$ values as in Section \ref{gaussian_predictor} for all models. We constructed a total of 200 splits in this way , and evaluated each model on each split as before (so that we can evaluate each model's probability of winning on a given split, as well as the average error over splits). 

We show the average error and win probability in Figure \ref{real_data_mse}
 and Figure \ref{real_data_winprob} respectively. We see a similar pattern as in Figure \ref{fig_cv}, with the Nuclear having a clear advantage over the other models in terms of win probability. The Nuclear also attains the lowest average MSE in the diabetes data, and is essentially tied with Ridge for lowest average MSE in the housing data. 

\begin{figure}[h]
\begin{center}
\includegraphics[width=14cm,height=8cm]{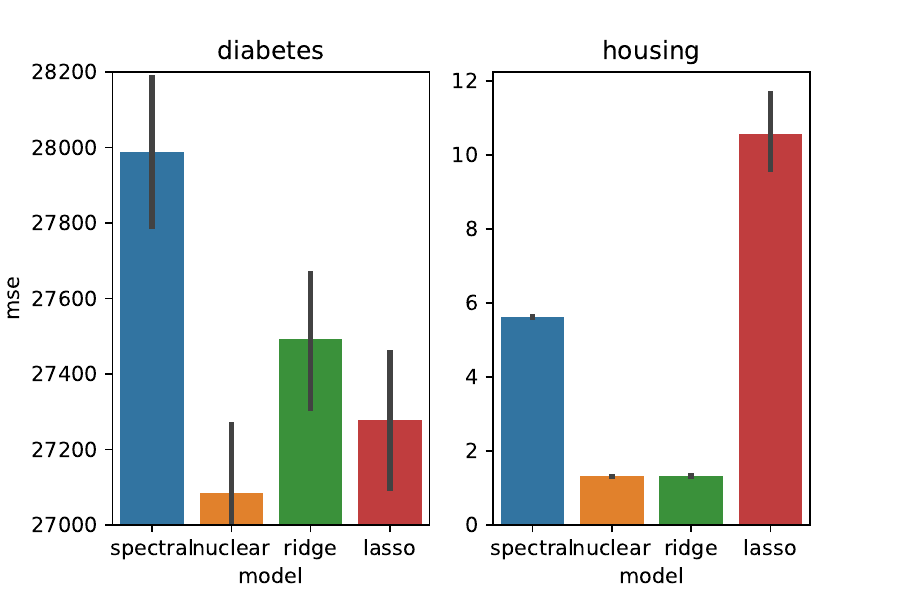}
\end{center}
\caption{Average test error, with 95 percent confidence intervals, on a random train-test split, for the diabetes and California housing datasets. Each bar is an aggregate over 200 splits. }
\label{real_data_mse}
\end{figure}

\begin{figure}[h]
\begin{center}
\includegraphics[width=14cm,height=8cm]{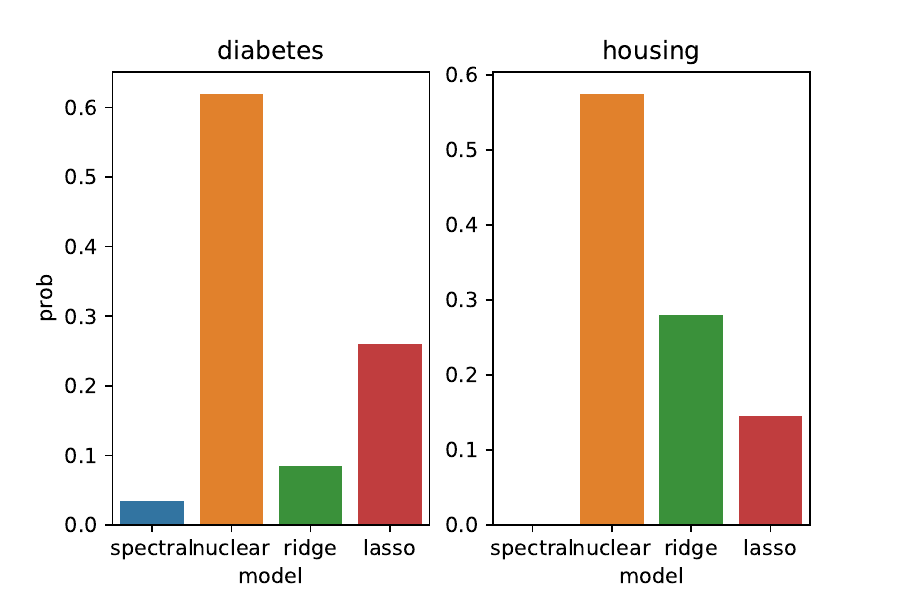}

\end{center}
\caption{Same as Figure \ref{real_data_mse}, except showing the probability that each model attains the lowest test error on a given split. Each bar is an aggregate over 200 splits.}
\label{real_data_winprob}
\end{figure}


\end{document}